\theoremstyle{definition}
\newtheorem{assume}{Assumption}
\newtheorem{cor}{Corollary}
\newtheorem{defn}{Definition}
\newtheorem{prop}{Proposition}
\newtheorem{lem}{Lemma}
\newtheorem{thm}{Theorem}
\newcommand{\E}{\mathbb{E}}
\newcommand{\fand}{\quad \mathrm{and} \quad}
\newcommand{\Ipq}{\mathbf{I}_{p,q}}
\newcommand{\m}[1]{\mathbf{#1}}
\newcommand{\Opq}{\mathbb{O}(p,q)}
\newcommand{\Prob}{\mathbb{P}}
\newcommand{\R}{\mathbb{R}}
\newcommand{\ud}{\, \mathrm{d}}
\newcommand{\Z}{\mathcal{Z}}
\DeclareMathOperator*{\argmin}{arg\,min}
\DeclareMathOperator{\diag}{diag}
\DeclareMathOperator{\rank}{rank}
\DeclareMathOperator{\var}{var}
\title{Spectral embedding of weighted graphs}
\author[*]{Ian Gallagher}
\author[*]{Andrew Jones}
\author[**]{Anna Bertiger}
\author[***]{Carey Priebe}
\author[*]{Patrick~Rubin-Delanchy}
\affil[*]{University of Bristol, U.K.}
\affil[**]{Microsoft, U.S.A.}
\affil[***]{Johns Hopkins University, U.S.A.}
\begin{document}

\maketitle

  % \author{Ian Gallagher\thanks{
  %     Ian Gallagher gratefully acknowledges support by the EPSRC (Doctoral Training Award)%\textit{please remember to list all relevant funding sources in the unblinded version}
  %   } \\
  %   School of Mathematics, University of Bristol, U.K. \\

  %   Andrew Jones\thanks{Andrew Jones gratefully acknowledges support by the the Alan Turing Institute}\\
  %   School of Mathematics, University of Bristol, U.K. \\
  %   and \\
  %   Anna Bertiger \\
  %   Microsoft \\
  %   and \\
  %   Carey Priebe \\
  %   Johns Hopkins University \\
  %   and \\
  %   Patrick Rubin-Delanchy\thanks{Patrick Rubin-Delanchy gratefully acknowledges support by the Alan Turing Institute and the Heilbronn Institute for Mathematical Research} \\
  %   School of Mathematics, University of Bristol, U.K. \\
  %   }

\bigskip
\begin{abstract}
When analyzing weighted networks using spectral embedding, a judicious transformation of the edge weights may produce better results. To formalize this idea, we consider the asymptotic behavior of spectral embedding for different edge-weight representations, under a generic low rank model. We measure the quality of different embeddings --- which can be on entirely different scales --- by how easy it is to distinguish communities, in an information-theoretic sense. For common types of weighted graphs, such as count networks or p-value networks, we find that transformations such as tempering or thresholding can be highly beneficial, both in theory and in practice. 

\end{abstract}

\noindent%
{\it Keywords:} network, matrix factorization, Gaussian mixture model, stochastic block model, Chernoff information

\section{Introduction}
\label{Sec:Intro}

% Graphs and graph embeddings
Many real-world phenomena, from worldwide communications on the internet to microscopic protein-protein interactions, produce data representing pairwise interactions between entities, which can naturally be interpreted as the nodes and weighted edges of a graph. One approach to analysing these structures is through graph embedding, mapping the nodes of the graph into a low dimensional space, in a manner intended to preserve salient aspects of the original structure. These embeddings can be used for a range of tasks, for example, node clustering \citep{girvan2002community} and edge weight prediction \citep{bell2007modeling}. 

For networks with unweighted edges, there are many techniques to convert a graph into a low dimensional embedding of the nodes and the surveys \citet{Goyal_2018, cai2018comprehensive} provide a useful taxonomy of graph embedding algorithms. Matrix factorization based embeddings work by computing decompositions of either the graph adjacency matrix or matrices of node proximity derived from these, such as the normalized Laplacian or the Katz similarity matrix \citep{ou2016asymmetric}. More recently, deep learning algorithms have been applied to find graph embeddings, most notably, DeepWalk \citep{Perozzi_2014}, and one of its successors, node2vec \citep{grover2016node2vec}, which apply ideas from natural language processing to random walks generated on the network.

% Statistical difficulties, in particular, to DL techniquues
While deep learning techniques provide empirically useful embeddings for real-world networks, translating these graph algorithms into principled statistical procedures is not straightforward. Previous work has shown that the similar word2vec embedding can be thought of as a form of matrix factorization \citep{levy2014neural}. However, it was only very recently proved that the node2vec algorithm produces consistent community recovery under a stochastic block model \citep{zhang2021consistency}, the most basic model of a graph with community structure \citep{holland1983stochastic}.

% Spectral clustering introduction
Spectral embedding \citep{von2007tutorial} refers to the subset of these algorithms that use spectral decomposition for matrix factorization. A typical application of such graph embedding techniques is to partition the nodes into groups using a clustering algorithm, the combination of spectral embedding followed by clustering being known as spectral clustering. When using spectral clustering there are several options to consider:
\begin{itemize}
	\item Adjacency matrix representation. Matrices obtained through different regularization techniques, such as the normalized Laplacian matrix, can yield different clustering outputs \citep{sarkar2015role,tang2018limit,cape2019spectral}. In brain connectivity networks, clustering using the Laplacian embedding separates the left and right hemispheres of the brain, while adjacency embedding distinguishes between white and grey matter \citep{priebe2019two}.
	\item Eigenvector selection and scaling. Given the spectral decomposition of the matrix, nodes can be embedded selecting eigenvectors corresponding to largest, smallest, or largest-in-magnitude eigenvalues, and are optionally scaled by eigenvalue \citep{von2007tutorial,RCY11,rubin2021statistical}. 
	\item Clustering algorithm. Given the node embedding, there is a choice of clustering algorithm, for example, the benefits of $K$-means versus Gaussian mixture modelling \citep{tang2018limit,athreya2017statistical,rubin2021statistical} have been investigated in depth.
\end{itemize}
The relative simplicity of spectral embedding compared to other (e.g.\ deep learning) approaches allows for a statistical description of results, including uncertainty. Knowledge of the theoretical error distribution can inform decisions on the points above. Under a stochastic block model, the theory suggests that no representation is uniformly `best' \citep{tang2018limit}, recommends using large positive and negative eigenvalues \citep{RCY11} in tandem with Gaussian clustering \citep{tang2018limit,athreya2017statistical,rubin2021statistical}, which makes the question of scaling somewhat irrelevant, since the groupings obtained from fitting a Gaussian mixture model with varying shaped components are invariant \citep{rubin2021statistical}.

% leaving the question of which matrix representation to use equivocal \citep{tang2018limit} and the question of scaling immaterial \citep{rubin2021statistical}.

In many real-world networks, edges are not simply present or absent, but can take a range of numeric values. In this paper we particularly have in mind situations where the edges represent counts or p-values (although the model and asymptotic theory is more general). The former situation is ubiquitous in network monitoring applications; in cyber-security each count might represent the traffic between two computers, which could be measured in different ways (e.g. byte count, packet count), aggregated over time \citep{kent2016cyber}. Alternatively, in biological applications, the count might represent a number of physically distinct links between entities, such as the number of synaptic connections between two neurons \citep{eichler2017complete}. Networks of p-values are particularly prevalent in genetics and population health, arising in gene co-expression analyses \citep{stuart2003gene,zhang2005general,bakken2016comprehensive}, genome wide association studies \citep{buniello2019nhgri}, and systematic Mendelian randomization \citep{hemani2018mr}. Large graph databases mapping detected associations and causal effects between genes, traits, drugs, and more, across disparate scientific studies, are being built in several ongoing initiatives \citep{wishart2006drugbank,epigraphdb2020bioinformatics,Davis2022}.

Because there is nothing preventing spectral embedding from being applied to a non-binary matrix, it makes a natural candidate for embedding a weighted graph. Techniques relying on random walks, such as node2vec, require extra mechanisms to incorporate weights on edges. Moreover, one might expect the main elements of the theory of spectral embedding to translate to weighted networks, and hope to extract similar methodological guidance. % bit of a distraction: Most statistical papers dealing with spectral techniques applied to real-valued matrices envisage distances or similarities, including classical multi-dimensional scaling and kernelized spectral clustering \citep{ng2002spectral,schiebinger2015geometry,li2020central}, which do not provide adequate models of weighted graphs in general. 

However, when moving from unweighted to weighted graphs, there emerges a new consideration, which is how to represent edge weight. An entry-wise transformation of the weighted adjacency matrix will result in a different spectral embedding which can capture different properties of the network. In the applications alluded to above, it is common to threshold the data --- p-values as significant/non-significant, or count data as positive versus zero --- or take the log --- to amplify the small p-values, or temper the large counts. %In this paper, we will demonstrate through theory and experiment that we can indeed expect considerable improvements through such data transformations. 

To investigate how different edge representations affect spectral embedding, an intrinsic measure of embedding quality is needed. Under a weighted stochastic block model \citep{xu2020optimal}, it is natural to aim for good community separation. We will show that nodes belonging to the same community form a multivariate Gaussian cluster (a central limit theorem) and, as a proxy for community separability, measure how well those Gaussian distributions can be distinguished, in an information-theoretic sense. %, as a function of their centres \emph{and} their covariances. %, which is a function of their centres \emph{and} their covariances. 
% Following \citet{tang2018limit}, we do this using Chernoff information. 
Thus we are able to show situations where data transformations such as thresholding really do give better results.

The remainder of this article is structured as follows. Section~\ref{Sec:ExecSum} gives a minimal working summary of our results, including guidance on what weight transformations to consider in different applications. Section~\ref{Sec:Applications} validates these recommendations in two examples. The first explores how the choice of edge weight representation affects clustering in a synthetic network of p-values, while the second explores how tempering large counts can reveal real-world geometry in a network of air traffic data. In Section~\ref{Sec:WGRDPG} we define a generic low-rank model, an extension of the generalized random dot product graph \citep{rubin2021statistical} to include weighted edges, and describe the asymptotic behavior of spectral embedding under this model. Weighted standard and degree-corrected stochastic block models are special cases.  In Section~\ref{Sec:CI}, we define and motivate size-adjusted Chernoff information as a measure of community separation under the weighted stochastic block model and derive several implications for choosing between different edge weight representations. Section~\ref{Sec:Conc} concludes. All proofs are in the Appendix.

% The remainder of this article is structured as follows. Section~\ref{Sec:WGRDPG} defines our low-rank expectation model, which is an extension of the generalized random dot product graph \citep{rubin2021statistical} to include weighted edges. Section~\ref{Sec:WGRDPG_Asym} describes the asymptotic behavior of the spectral embedding of a weighted generalized random dot product graph. Section~\ref{Sec:WSBM} discusses the consequences of the theory for the weighted stochastic block model, a special case of the weighted generalized random dot product graph. In Section~\ref{Sec:CI}, we define size-adjusted Chernoff information to measure community separation in the spectral embedding of a weighted stochastic block model graph and through this decide between different edge weight representations. Section~\ref{Sec:matrix_rep} explores how the choice of edge representation can affect clustering in a synthetic example (elaborating on Figure~\ref{Fig:p-values_regions}), before focussing on a cyber-security application, and Section~\ref{Sec:Conc} concludes. Proofs of the stated theorems and lemmas are provided in the Appendix.

\section{Executive summary}
\label{Sec:ExecSum}
This article concerns the statistical analysis of a weighted graph on $n$ nodes, which we take to be represented by a symmetric matrix $\m{A} \in \R^{n \times n}$. Given a dimension $d \leq n$, the spectral embedding of this graph is a matrix $\m{X}_\m{A} \in \R^{n \times d}$ (see Definition~\ref{Def:ASE}); row $i$ of $\m{X}_\m{A}$ provides a $d$-dimensional representation of node~$i$.

Under a weighted stochastic block model, the nodes are grouped into $K$ communities and entries of $\m{A}$ corresponding to the same pair of communities are identically distributed (Definition~\ref{Def:WSBM}). %The matrices $\m{B}\in \R^{K \times K}$ and $\m{C} \in \R_+^{K \times K}$ hold the mean and variance of this distribution, respectively, for each pair. 
The points of $\m{X}_\m{A}$ corresponding to a community asymptotically follow a $d$-dimensional Gaussian distribution, so that $\m{X}_\m{A}$ follows a $K$-component Gaussian mixture model. 
%, whose centre and covariance are functions of $\m{B}$ and $\m{C}$. 
The separability between every pair of Gaussian distributions, and hence the corresponding communities, is measured using a variant of Chernoff information.

Our theory results in some guidelines for choosing between $\m{A}$ and other entry-wise transformations of $\m{A}$.
\begin{itemize}
    \item If possible, convert $\m{A}$ into a sparse matrix using an entry-wise affine transformation ($\m{A}_{ij} \rightarrow a \m{A}_{ij} + b$, $a \ne 0$) to enable faster spectral embedding computation (at no cost to statistical performance).
    %\item Chernoff information usually does not change under an entry-wise affine transformation of $\m{A}$ ($\m{A}_{ij} \rightarrow a \m{A}_{ij} + b$). Use this to find a sparse representation of the adjacency matrix.
    \item If $\m{A}$ contains p-values, taking the logarithm ($\m{A}_{ij} \rightarrow -\log \m{A}_{ij}$), or thresholding as significant/non-significant ($\m{A}_{ij} \rightarrow \mathbb{I}(\m{A}_{ij} \le \tau)$, for some $\tau$) will usually be superior to spectral embedding $\m{A}$ directly. Choose taking the logarithm if the tests have low power, and thresholding if the network is sparse.
    \item If $\m{A}$ contains extreme values, such as large counts, apply some form of tempering, for example, using a fractional power ($\m{A}_{ij} \rightarrow \m{A}^{1/2}_{ij}$) or taking the logarithm of the positive edge weights $(\m{A}_{ij} \rightarrow \mathbb{I}(\m{A}_{ij} > 0) \log \m{A}_{ij}$). 
    % \item If $\m{A}$ contains p-values, taking the log (option 1), or thresholding as significant/non-significant (option 2) will usually be superior to spectral embedding $\m{A}$ directly. Choose the first option if the tests have low power, the second if the network is sparse.
    % \item If $\m{A}$ contains extreme values, such as large counts, apply some form of tempering (e.g. $\m{A}_{ij} \rightarrow \m{A}^{1/2}_{ij}$ or $\m{A}_{ij} \rightarrow \mathbb{I}(\m{A}_{ij} > 0) \log \m{A}_{ij}$). 
\end{itemize}

\section{Applications}
\label{Sec:Applications}
\subsection{Pairwise p-value data}
\label{Sec:p-values}
Consider the problem of detecting a cluster of anomalous activity on a network, on the basis of observed p-values $p_{ij}$ for pairs of nodes, quantifying our level of surprise in their activity. For example, in cyber-security applications a low p-value might occur if, relative to historical behavior, a much smaller or larger volume of communication was observed \citep{heard2010bayesian}, if a communication used an unusual channel \citep{nah16} or took place at a rare time of day \citep{PW18}. In genetics and population health, these p-values might instead capture pairwise significance-of-association measures between genes, traits, drugs, and more. %\citep{stuart2003gene,zhang2005general,bakken2016comprehensive}.

\subsubsection{Model setup}
Consider a two-community weighted stochastic block model with edge weights modelled by
\begin{equation*}
    \m{A}_{ij} \mid \m{Z}_i, \m{Z}_j  \stackrel{\mathrm{ind}}{\sim} 
    (1 - \rho) \delta_1  + \rho \, \textrm{Beta}(\alpha_{\m{Z}_i \m{Z}_j}, 1),
\end{equation*}
where $\m{Z}_i \in \{1,2\}$ denotes the community of node $i$, $\delta_1$ is the delta distribution that places all its probability mass at one, $\alpha_{22} = 1$, and $\alpha_{11}, \alpha_{12} \le 1$ with at least one strict inequality. In this model, nodes in the first community ($\m{Z}_i = 1$) are regarded as `anomalous', and nodes in the second ($\m{Z}_i = 2$) `normal'. Accordingly, p-values observed between two normal nodes are uniformly distributed, whereas p-values for interactions involving anomalous nodes are stochastically smaller than uniform. It would often be unrealistic to expect to observe a p-value for every pair, so we assume each p-value is only observed with probability $\rho$, independently of the communities involved. An unobserved p-value is encoded as the value 1. In a cyber-security setting this could represent no data being transferred between computers, or an initial triage suggesting it is not worth running a full test for anomalous behavior. It might also be unrealistic to assume the p-values are conditionally independent; in genome wide association studies the actual dependency is quite complex \citep{li2012evaluating}. In such situations, we would recommend treating clustering results as potentially indicative, rather than scientifically conclusive.

\subsubsection{Edge weight representation}
We discuss three different representations of p-values in this network. The first is the affine entry-wise transformation $\m{A}_{ij}^\textrm{P} = 1 - \m{A}_{ij}$. This has the practical advantage of assigning uninformative p-values in the network to zero, making it easier to exploit sparsity in the adjacency spectral embedding computation. Lemma~\ref{Thm:Affine_SBM} provides an asymptotic sense in which this transformation will not affect the quality of the embedding for distinguishing communities. Entries $\m{A}_{ij}^\textrm{P}$ follow a zero-inflated beta distribution.

Based on Fisher's method for combining p-values \citep{fisher1934statistical}, one could consider embedding the matrix of log p-values $\m{A}_{ij}^\textrm{L} = -\log (\m{A}_{ij})$. Ignoring any graph setting, this is the uniformly most powerful method of combining p-values if they have a $\text{Beta}(\alpha, 1)$ distribution with $\alpha < 1$ under the alternative hypothesis \citep{heard2018choosing}. Entries $\m{A}_{ij}^\textrm{L}$ follow a zero-inflated exponential distribution.

Finally, one may only be interested in p-values less than a chosen threshold $\tau \in (0,1)$ \citep{heard2010bayesian} and consider embedding the matrix with entries $\m{A}_{ij}^\textrm{T} = \mathbb{I}(\m{A}_{ij} \le \tau)$. This converts the network into a binary graph and entries $\m{A}_{ij}^\textrm{T}$ follow a Bernoulli distribution 
dependent on the edge probability parameter $\rho$.

Table~\ref{Tab:p-values_rep} gives a summary of the three p-value transformations outlined above and the resulting edge weight distributions for the corresponding weighted stochastic block models.
\begin{table}[ht]
  \centering
  \begin{tabular}{@{}lll@{}}
  	\toprule
    Data representation & Matrix entries & Edge weight distribution \\
    \midrule
    $1-$p-values & $\m{A}^\textrm{P}_{ij} = 1 - \m{A}_{ij}$ & $(1 - \rho) \delta_0  + \rho \, \textrm{Beta}(\alpha, 1)$ \\
    log p-values & $\m{A}_{ij}^\textrm{L} = -\log (\m{A}_{ij})$ & $(1 - \rho) \delta_0  + \rho \, \textrm{Exp}(\alpha)$ \\
    Threshold p-values & $\m{A}_{ij}^\textrm{T} = \mathbb{I}(\m{A}_{ij} \le \tau)$ & $\textrm{Bernoulli}(\rho \tau^\alpha)$ \\
    \bottomrule
  \end{tabular}
  \caption{Three different representations of the p-values network data and the corresponding edge weight distributions.}
  \label{Tab:p-values_rep}
\end{table}

\subsubsection{Example embeddings}

To show the different embeddings, we generate a weighted stochastic block model with $n = 1000$ nodes where a node exhibits anomalous behavior with probability $\pi_1 = 0.1$. We pick $\alpha_{11} = 0.1$ and $\alpha_{12} = 1.0$ and edge probability $\rho = 0.25$, so only the edge weights between anomalous nodes ($\m{Z}_i = \m{Z}_j = 1$) contain useful signal.

Figure~\ref{Fig:ASE_p-values_2SBM_distbn} shows the spectral embeddings into $d = 2$ dimensions obtained for three transformations of the p-values matrix $\m{A}$; the $1-$p-values $\m{A}^\textrm{P}_{ij}$, the log p-values $\m{A}^\textrm{L}_{ij}$ and the threshold p-values $\m{A}^\textrm{T}_{ij}$ with threshold $\tau = 0.1$. Points are colored according to their community assignment $\m{Z}_i$ with anomalous nodes being represented by red triangles.
\begin{figure}[htbp]
	\centering
	\includegraphics[width=\textwidth]{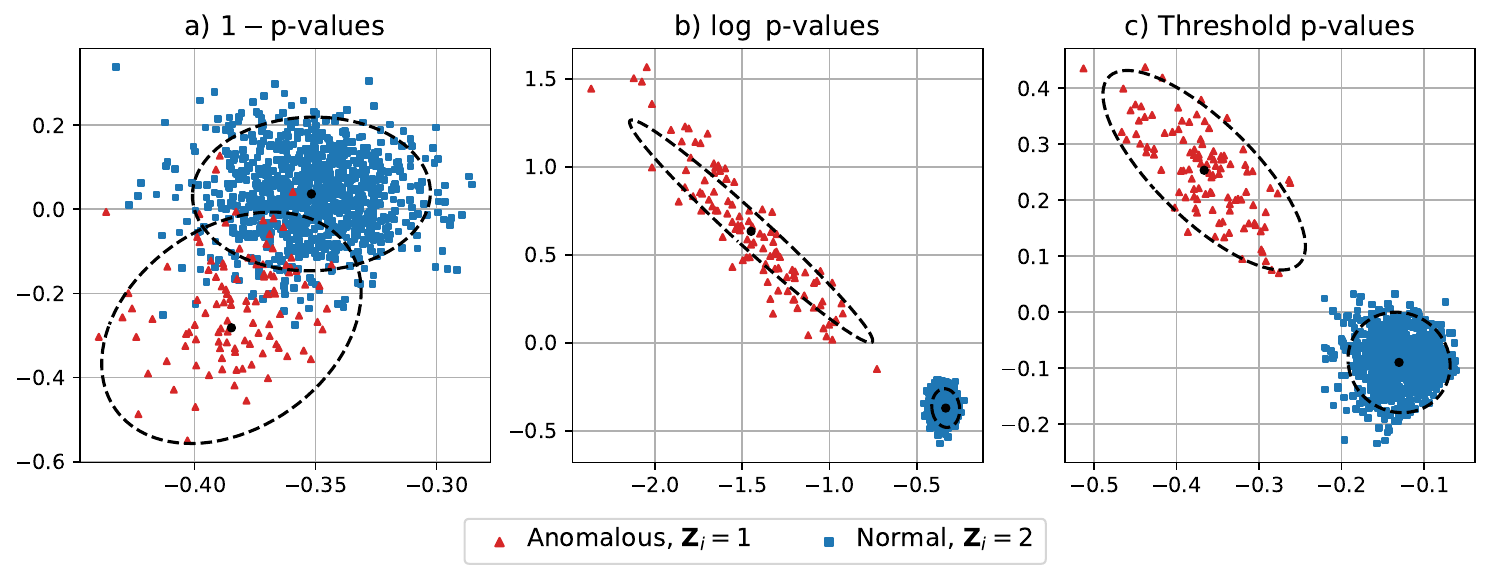}
	\caption{Two-dimensional embeddings of graphs following two-community weighted stochastic block models corresponding to a) the $1-$p-values matrix $\m{A}^\textrm{P}$, b) the log p-values matrix $\m{A}^\textrm{L}$ and c) the threshold p-value matrix $\m{A}^\textrm{T}$. The points are colored according to their true community assignment $\m{Z}_i$. Ellipses show the 95\% contours of the asymptotic Gaussian components calculated using \textbf{Corollary~\ref{Cor:WSBM_CLT}}.}
	\label{Fig:ASE_p-values_2SBM_distbn}
\end{figure}

In Figure~\ref{Fig:ASE_p-values_2SBM_distbn}a the embedding for the $1-$p-values matrix can partially distinguish between the anomalous and normal nodes in the network. However, in Figures~\ref{Fig:ASE_p-values_2SBM_distbn}b and \ref{Fig:ASE_p-values_2SBM_distbn}c, the embeddings for the log p-values matrix and threshold p-values matrix both separate the two types of nodes perfectly.
% (very similar and asymptotically equivalent to the embedding of the p-values matrix, not shown) 
% In Figures~\ref{Fig:ASE_p-values_2SBM_distbn}a and \ref{Fig:ASE_p-values_2SBM_distbn}b the embeddings for the p-values matrix and the $1-$p-values matrix appear very similar, almost mirror images of each other. It is not surprising they have a similar overlap between the two communities as they must have the same size-adjusted Chernoff information. Both stochastic block models have size-adjusted Chernoff information $C = 1.76 \times 10^{-3}$.

% In Figures~\ref{Fig:ASE_p-values_2SBM_distbn}c and \ref{Fig:ASE_p-values_2SBM_distbn}d the embeddings for the log p-values matrix and the threshold p-values matrix visually have better separation of the two communities compared to the p-values and the $1-$p-values matrices. The log p-values and threshold p-values stochastic block models have size-adjusted Chernoff information $C = 6.43 \times 10^{-3}$ and $C = 6.58 \times 10^{-3}$ respectively, so the threshold p-values is only slightly preferable in this example.

\subsubsection{Embedding comparison}
Size-adjusted Chernoff information introduced in Definition~\ref{Def:SACI} measures the separability of communities in the adjacency spectral embedding of a weighted stochastic block model for different representations of the edge weights in a network. For the three representations shown in Figure~\ref{Fig:ASE_p-values_2SBM_distbn}, the size-adjusted Chernoff information using the $1-$p-values matrix is $C^\textrm{P} = 1.2 \times 10^{-3}$, while for the log p-values matrix and threshold p-values matrix, the size-adjusted Chernoff information is $C^\textrm{L} = 4.8 \times 10^{-3}$ and $C^\textrm{T} = 4.9 \times 10^{-3}$ respectively, both a major improvement. %over using the p-values directly.
% A larger size-adjusted Chernoff information means it is easier to recover the true community structure of a stochastic block model from its embedding.

We perform an experiment to compare the embeddings using $1-$p-values $\m{A}^\textrm{P}_{ij}$, log p-values $\m{A}^\textrm{L}_{ij}$ and threshold p-values $\m{A}^\textrm{T}_{ij}$ for three thresholds $\tau \in \{ 0.01, 0.05, 0.10 \}$. We consider versions of the model where the proportion of anomalous nodes is unbalanced or balanced, $\pi_1 \in \{0.1, 0.5\}$, and three edge probabilities, $\rho \in \{0.05, 0.25, 0.50\}$. For every pair of options, we compute the size-adjusted Chernoff information for each p-value representation for a range of beta distribution parameters $\alpha_{11}, \alpha_{12} \in (0,1]$.

Figure~\ref{Fig:ASE_p-values_phase} contains phase diagrams for each model showing which p-value transformation results in the greatest size-adjusted Chernoff information over a range of different parameters.
\begin{figure}[htbp]
	\centering
	\includegraphics[width=\textwidth]{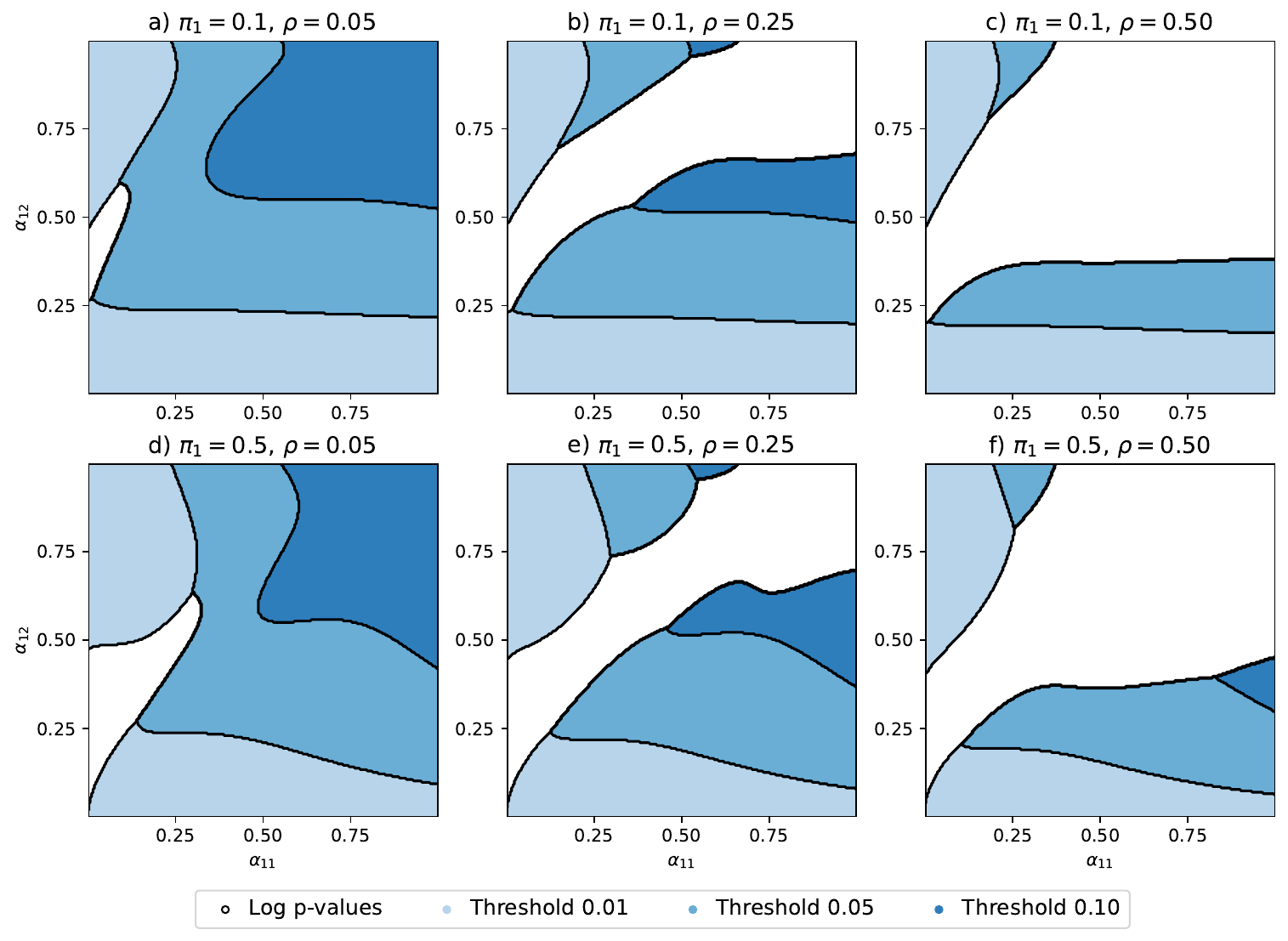}
	\caption{Phase diagrams showing which p-value transformation results in the greatest size-adjusted Chernoff information for models with anomaly probability $\pi_1 \in \{0.1, 0.5\}$ and edge probability $\rho \in \{0.05, 0.25, 0.50\}$. In the white regions, embedding log p-values achieves the greatest size-adjusted Chernoff information. In the colored regions, embedding threshold p-values is preferred, where the color indicates the threshold $\tau \in \{ 0.01, 0.05, 0.10 \}$ achieving maximal size-adjusted Chernoff information.}
	\label{Fig:ASE_p-values_phase}
\end{figure}
Most importantly, embedding the $1-$p-values (or p-values) matrix directly never results in the greatest size-adjusted Chernoff information. It is always preferable to use either the log p-values or threshold p-values. 

Choosing between the latter two and what threshold $\tau$ to use is trickier, although often there is not much difference and, in practice, it is worth trying both approaches. As a rule-of-thumb, using log p-values is better when the graph is dense ($\rho \geq 0.50$) and using threshold p-values better when the graph is sparse ($\rho \leq 0.05$). When the edge weights involving anomalous nodes do not contain much signal ($\alpha_{11} \approx 1$ and $\alpha_{12} \approx 1$) it is preferable to use log p-values or threshold p-values with a larger threshold, $\tau = 0.10$. When edge weights have strong signal ($\alpha_{11} \approx 0$ or $\alpha_{12} \approx 0$) it is preferable to use threshold p-values with a smaller threshold, $\tau = 0.01$.

% while using threshold p-values is generally preferable if one set of p-values has a very strong signal ($\alpha_{11}$ or $\alpha_{12}$ close to zero), while using a threshold is usually preferable when you expect relatively few anomalous nodes in the network ($\pi_1$ close to zero). 

% To select the best threshold, if you believe that edge weights involving anomalous nodes do not contain much signal ($\alpha_{11}$ and $\alpha_{12}$ close to one), then using a larger threshold like $\tau = 0.1$ is preferable. If at least one type of connection has strong signal, then using a smaller threshold like $\tau = 0.01$ is preferable.

\subsection{Air traffic data}
\label{Sec:Airport}
We analyze the crowdsourced OpenSky dataset detailing every flight between airports during 2019 \citep{strohmeier2021crowdsourced}. We construct a network of $n \approx 15,000$ nodes representing the different airports and create a weighted adjacency matrix $\m{A}$ where $\m{A}_{ij}$ is the total number of flights in either direction between two corresponding airports. This results in 1.4 million network edges with a heavy-tailed distribution, the largest edge weight representing over 56,000 flights between two busy airports. %In this example we compare how different edge weight representations reveal the underlying geography of the network.

\subsubsection{Edge weight representation}
To temper the effect of large edge weights in the network, the entries of $\m{A}$ are transformed using the mapping $\m{A}_{ij} \to \m{A}^\gamma_{ij}$ for $\gamma \in [0,1]$. We analyze three different versions of the flight network; the original adjacency matrix $\m{A}$, the matrix with square root entries $\m{A}^{1/2}_{ij}$, and the binary adjacency matrix $\mathbb{I}(\m{A}_{ij} > 0)$ which states if there was any flight between two airports. The latter corresponds to the case $\gamma \to 0^+$ since then $\m{A}^\gamma_{ij} \to \mathbb{I}(\m{A}_{ij} > 0)$.

The transformed count data matrices are embedded into $d = 4$ dimensions selected using profile likelihood  \citep{zhu2006automatic}. Each embedding forms a collection of rays emanating from the origin, the geometry predicted by a weighted degree-corrected stochastic block model (Definition~\ref{Def:WDCSBM}). Each ray roughly corresponds to a continent, with the busier airports further from the origin. Following recommendations regarding community detection under a degree-corrected stochastic block model \citep{passino2022spectral}, the embedding is projected onto the unit sphere in $d = 3$ dimensions. Figure~\ref{Fig:Flight_ASE_Spherical} shows the first two dimensions of the spherical projection of the three different adjacency spectral embeddings.
\begin{figure}[htbp]
	\centering
	\includegraphics[width=\textwidth]{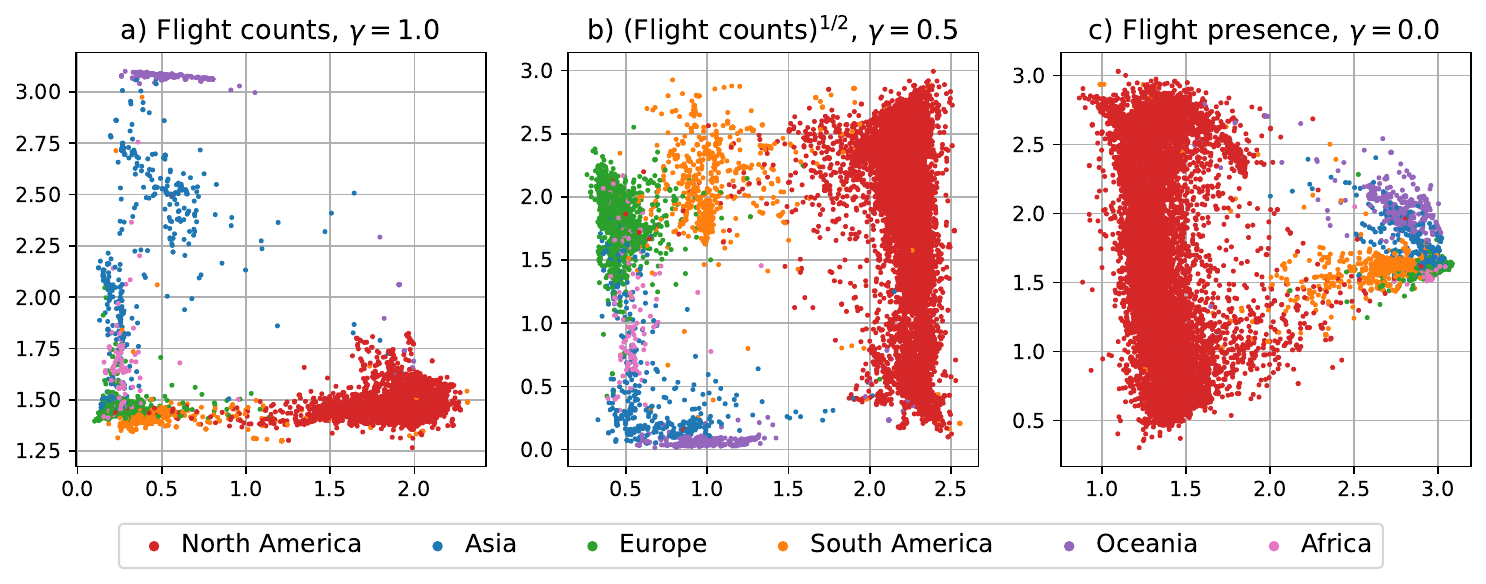}
	\caption{The first two dimensions of the spherical projection of the embeddings for the flight data network of a) the flight counts matrix $\m{A}_{ij}$, b) the square root flight counts matrix $\m{A}^{1/2}_{ij}$ and c) the flight presence event matrix $\mathbb{I}(\m{A}_{ij} > 0)$. Points are colored based on the continent of the airport.}
	\label{Fig:Flight_ASE_Spherical}
\end{figure}

% The multiple rays of embedded points emanating from the origin reminiscent of a weighted degree-corrected stochastic block model (\todo{Definition in theory section}), where the distance along the ray measures the `busyness' of the airport.

The embedding using the square root of the flight counts in Figure~\ref{Fig:Flight_ASE_Spherical}b seems to form a ring capturing some aspect of the topology of the globe. This topology is less obvious in Figures~\ref{Fig:Flight_ASE_Spherical}a and \ref{Fig:Flight_ASE_Spherical}c.

% captures some aspect of the topology of the globe, in the sense that we see a `loop'.
% . The points form a ring with the three busiest continents in terms of number of flights (Asia, Europe and North America) forming the majority of the loop. The other continents fit into the loop based on their geography; Africa between Asia and Europe, Oceania between Asia and North America, and South America between Europe and North America.

\subsubsection{Embedding comparison}
To give a more quantitative argument for choosing $\gamma = 1/2$ in this example, we model the spherically projected points corresponding to each continent with multivariate Gaussian distributions and compute size-adjusted Chernoff information according to Definition~\ref{Def:SACI}, which we treat as an empirical measure of separation.

% Using the square root of the flight counts produces an embedding of the flight network data consistent with world geography, but we would like to quantify this claim and demonstrate that the other two matrix representations do not mirror the true geometry of the network. This is done using an empirical version of the size-adjusted Chernoff information. If the network was generated by a weighted degree-corrected stochastic block model, the spherical projection of the embedding for each continent would have an asymptotic Gaussian distribution. For each continent, the mean and covariance is computed and the size-adjusted Chernoff information is estimated using Definition~\ref{Def:SACI} reporting the most difficult communities to distinguish.

The size-adjusted Chernoff information for the flight counts matrix is $C = 0.299$ corresponding to the difficulty in distinguishing the continents of Europe and South America in Figure~\ref{Fig:Flight_ASE_Spherical}a. This improves to $C = 0.488$ for the square root flight counts matrix corresponding to Africa and Asia in Figure~\ref{Fig:Flight_ASE_Spherical}b, but degrades to $C = 0.278$ for the flight presence event matrix for the same continents in Figure~\ref{Fig:Flight_ASE_Spherical}c. %This shows that the embeddings for each continents are more separable using the square root transformation than using the counts directly or flight presence events.

\section{Weighted generalized random dot product graph}
\label{Sec:WGRDPG}
% This article concerns the statistical analysis of a weighted graph, which we take to be represented by a symmetric matrix $\m{A} \in \R^{n \times n}$. 
% The variable $d$ will always refer to a positive integer indicating dimension, and $p, q \ge 0$ two integers such that $p + q = d$.

\begin{defn}[Weighted generalized random dot product graph]
\label{Def:WGRDPG}
Let $\Z$ be a sample space with probability distribution $F$. Let $\{H(\m{z_1}, \m{z_2}): \m{z}_1, \m{z}_2 \in \Z\}$ be a family of real-valued distributions with the following properties:
\begin{enumerate}[label=\emph{\textbf{\roman*.}}, labelindent=0pt, labelwidth=!]
\item (Symmetry) $H(\m{z}_1, \m{z}_2) = H(\m{z}_2, \m{z}_1)$ for all $\m{z}_1, \m{z}_2 \in \Z$.
\item (Low-rank expectation) There exists a map $\phi: \Z \to \R^d$ such that, for all $\m{z}_1, \m{z}_2 \in \Z$, if $A \sim H(\m{z}_1, \m{z}_2)$, then
\begin{equation*}
	\E(A) = \phi(\m{z}_1)^\top \Ipq \phi(\m{z}_2),
      \end{equation*}
      where  $\Ipq = \diag(1,\ldots,1,-1,\ldots,-1)$ is a diagonal matrix with $p$ ones followed by $q$ minus ones, with $p+q = d$.
\end{enumerate}
A symmetric matrix $\m{A} \in \R^{n \times n}$ is distributed as a \emph{weighted generalized random dot product graph} with signature $(p,q)$, if  $\smash{\m{Z}_1, \ldots, \m{Z}_n \stackrel{\mathrm{iid}}{\sim} F}$ and, for $i < j$,
\begin{equation*}
	\m{A}_{ij} \mid \m{Z}_i, \m{Z}_j  \stackrel{\mathrm{ind}}{\sim} H(\m{Z}_i, \m{Z}_j).
\end{equation*} 
\end{defn}
% Under this model, the distribution of the entry $\m{A}_{ij}$ depends only on the positions $\m{Z}_i$ and $ \m{Z}_j$. For example, a $K$-community stochastic block model might use $\Z = \{1, \ldots, K \}$, so that the distribution of $\m{A}_{ij}$ depends only on the communities of nodes of $i$ and $j$, and the distribution $F$ indicates the probability of a node being in each community. 

Letting  $\m{X}_i = \phi(\m{Z}_i) \in \R^d$, we define the point cloud $\smash{\m{X} = [\m{X}_1 | \ldots | \m{X}_n]^\top \in \R^{n \times d}}$, so that, conditional on $\m{Z}_1, \ldots, \m{Z}_n$
\begin{equation*}
  \E(\m{A}_{ij}) = \m{P}_{ij} \fand \var(\m{A}_{ij}) = v(\m{Z}_i, \m{Z}_j),
\end{equation*}
where $\m{P} = \m{X} \Ipq \m{X}^\top \in \R^{n \times n}$ and $v(\m{z}_1, \m{z}_2)$ is the variance of $H(\m{z}_1, \m{z}_2)$.

% The main result of this paper is to show how the spectral decomposition of the weighted adjacency matrix $\m{A}$ relates to the spectral decomposition of the matrix of mean values $\m{P}$ and, through those, learn about the latent positions via $\m{X}$.

% $Z_i$. 
% %The latent positions $\m{Z}_1, \ldots, \m{Z}_n \in \Z$ act as the roles of the nodes in the weighted graph, where the distribution of the edge weight $\m{A}_{ij}$ depends only on the positions $\m{Z}_i$ and $ \m{Z}_j$. 

% The low-rank expectation property assumes that the expected edge weights have a particular form. The function $\phi$ takes the latent positions $\m{Z}_i \in \Z$ and maps them to points $\m{X}_i = \phi(\m{Z}_i) \in \R^d$. We aim to learn about the latent positions $\m{Z}_i$ through the collection of points 

\begin{defn}[Adjacency spectral embedding]
\label{Def:ASE}
Given the eigendecompositions
\begin{equation*}
	\m{A} = \m{U}_\m{A} \m{\Lambda}_\m{A} \m{U}_\m{A}^\top + \m{U}_{\m{A},\perp} 	\m{\Lambda}_{\m{A},\perp} \m{U}_{\m{A},\perp}^\top \fand
	\m{P} = \m{U}_\m{P} \m{\Lambda}_\m{P} \m{U}_\m{P}^\top,
\end{equation*}
where $\m{U}_\m{A}, \m{U}_\m{P} \in \mathbb{O}(n \times d)$ and $\m{\Lambda}_\m{A}, \m{\Lambda}_\m{P} \in \R^{d \times d}$ are diagonal matrices comprising the $d$ largest eigenvalues in absolute value of $\m{A}$ and $\m{P}$ respectively, denote by $\m{X}_\m{A}, \m{X}_\m{P} \in \R^{n \times d}$ the \emph{spectral embeddings}
\begin{equation*}
	\m{X}_\m{A} = \m{U}_\m{A} |\m{\Lambda}_\m{A}|^{1/2} \fand
	\m{X}_\m{P} = \m{U}_\m{P} |\m{\Lambda}_\m{P}|^{1/2},
\end{equation*}
where $|\cdot|$ applied to a diagonal matrix denotes the entry-wise absolute value. %where $\m{\Sigma}_\m{A} = \m{\Lambda}_\m{A} \Ipq$ and $\m{\Sigma}_\m{P} = \m{\Lambda}_\m{P} \Ipq$ are the diagonal matrices comprising the $d$ largest singular values of $\m{A}$ and $\m{P}$ respectively.
\end{defn}
Spectral embedding only allows estimation of $\m{X}_i$, rather than $\m{Z}_i$, and this has practical consequences, for example, when $\m{Z}_i \neq \m{Z}_j$ but $\m{X}_i = \m{X}_j$ under the weighted stochastic block model. % (see Section~\ref{sec:gaussian_sbm}).

\subsection{The weighted stochastic block model}
In the following definitions, $\Z = \{1, \ldots, K \}$ is a sample space with probability distribution $F$ and $\{H(\m{z}_1, \m{z}_2): \m{z}_1, \m{z}_2 \in \Z\}$ a symmetric family of real-valued distributions. The weighted stochastic block model is the most basic model for community structure in a weighted network.
\begin{defn}[Weighted stochastic block model]
    \label{Def:WSBM}
    A symmetric matrix $\m{A} \in \R^{n \times n}$ is distributed as a \emph{weighted stochastic block model}, if $\smash{\m{Z}_1, \ldots, \m{Z}_n \stackrel{\mathrm{iid}}{\sim} F}$ and, for $i < j$,
    \begin{equation*}
	   \m{A}_{ij} \mid \m{Z}_i, \m{Z}_j \stackrel{\mathrm{ind}}{\sim} H(\m{Z}_i, \m{Z}_j).
    \end{equation*}
\end{defn}
Given $\m{Z} \sim F$, we denote the probability a node is assigned to community $k$ by $\pi_k = \Prob(\m{Z} = k)$. Let $\m{B}\in \R^{K \times K}$ and $\m{C} \in \R_+^{K \times K}$ respectively denote the \emph{block mean} and \emph{block variance} matrices, that is, $\m{B}_{k \ell}$ and $\m{C}_{k \ell}$ are the mean and variance of $H(k, \ell)$, if these moments are defined.

To show the weighted stochastic block model is a weighted generalized random dot product graph, the following canonical construction for $\phi$ is proposed. Consider the eigendecomposition $\smash{\m{B} = \m{U}_\m{B} \m{\Lambda}_\m{B} \m{U}_\m{B}^\top}$, where $\m{\Lambda}_\m{B} \in \R^{d \times d}$ is a diagonal matrix comprising the $d$ non-zero eigenvalues ($p$ positive, $q$ negative) of $\m{B}$ and $\m{U}_\m{B} \in \mathbb{O}(K \times d)$ a matrix comprising its $d$ corresponding orthonormal eigenvectors. Constructing the spectral embedding $\smash{\m{X}_\m{B} = \m{U}_\m{B} | \m{\Lambda}_\m{B} |^{1/2} \in \R^{K \times d}}$, we define the map  $\phi: \Z \to \R^d$ to take an index $k \in\Z$, representing a community, to the corresponding row of the spectral decomposition, that is, $\phi(k) = (\m{X}_{\m{B}})_k$ and
\[\m{X}_i = \phi(\m{Z}_i) = {(\m{X}_{\m{B}})}_{\m{Z}_i}.\]
Since $\smash{\m{B} = \m{X}_\m{B} \Ipq \m{X}_\m{B}^\top}$, the low-rank expectation condition of Definition~\ref{Def:WGRDPG} is satisfied with dimension $d$ and signature $(p,q)$.

% % \begin{defn}[Weighted mixed membership stochastic block model]
% % \label{Def:WMMSBM}
% % A symmetric matrix $\m{A} \in \R^{n \times n}$ is distributed as a \emph{weighted mixed membership stochastic block model}, if $\m{Z}_{i \to j} \stackrel{\mathrm{ind}}{\sim} F_i$ and, for $i < j$,
% % \begin{equation*}
% % 	\m{A}_{ij} \mid \m{Z}_{i \to j}, \m{Z}_{j \to i} \stackrel{\mathrm{ind}}{\sim} H(\m{Z}_{i \to j}, \m{Z}_{j \to i}).
% % \end{equation*}
% % \end{defn}

The following is a simple extension which allows us to model nodes within the same community to have different degrees ($\sum_i \mathbb{I}(A_{ij} > 0)$), inspired by the standard degree-corrected stochastic block model \citep{karrer2011stochastic}.

%\citep{karrer2011stochastic}

\begin{defn}[Weighted degree-corrected stochastic block model]
\label{Def:WDCSBM}
Let $G_k$, $k \in \Z$ be a collection of probability distributions on the interval $(0,1]$. A symmetric matrix $\m{A} \in \R^{n \times n}$ is distributed as a \emph{weighted degree-corrected stochastic block model}, if $\m{Z}_1, \ldots, \m{Z}_n \stackrel{\mathrm{iid}}{\sim} F$ and, for $i < j$,
\begin{equation*}
	\m{A}_{ij} \mid w_i, w_j, \m{Z}_i, \m{Z}_j \stackrel{\mathrm{ind}}{\sim} (1 - w_i w_j) \delta_0 + w_i w_j H(\m{Z}_i, \m{Z}_j),
\end{equation*}
where $w_i \mid \m{Z}_i \stackrel{\mathrm{ind}}{\sim} G_{\m{Z}_i}$ is a node-specific weight parameter.
\end{defn}
To represent this model as a weighted generalized random dot product graph, we assign to each node $i$ the latent position
\[\m{X}_i = w_i \: {(\m{X}_{\m{B}})}_{\m{Z}_i},\]
so that $\phi:  (0,1] \times \Z \rightarrow \R^d$. Thus, the latent positions $\m{X}_i$ live on $K$ rays emanating from the origin.

\subsection{Asymptotics}
\label{Sec:WGRDPG_Asym}
In this section we present results concerning the asymptotic behavior of the spectral embedding $\m{X}_\m{A}$, when taken as an estimate of the point cloud $\m{X}$.

We note at this junction that there is a degree of freedom in the choice of the map $\phi$, since the map $\smash{\phi'(\m{z}) = \m{Q}^\top \phi(\m{z})}$ will also satisfy the low-rank expectation assumption of Definition~\ref{Def:WGRDPG}, for any matrix $\smash{\m{Q} \in \Opq = \{\m{M} \in \R^{d \times d}\ :\ \m{M} \Ipq \m{M}^\top = \Ipq\}}$, the group of indefinite orthogonal transformations. Without imposing a canonical choice for $\phi$, one can only hope for the point cloud  $\m{X}_\m{A}$ to approach $\m{X}$ up to such a transformation. Statements of convergence, below, will therefore be made about the realigned point cloud $\m{X}_\m{A} \m{Q}_n$, for some $\m{Q}_n \in \Opq$ known only to an oracle who has access to $\m{A}$ and $\m{X}$. To achieve these asymptotic results, the following regularity conditions are introduced.

% That $\m{Q}_n$ is not available in practice is no issue in the applications considered in this paper. In particular, Chernoff information is invariant to a full-rank linear transformation

% For this reason, asymptotic results to follow hold only up to such transformations, and error will be characterised for .

\begin{assume}
\label{Ass:Reg}
The weighted generalized random dot product graph satisfies the following conditions:
\begin{enumerate}[label=\emph{\textbf{\roman*.}}, labelindent=0pt, labelwidth=!]
\item (Minimal dimensionality) For $\m{Z} \sim F$, the random vector $\m{X} = \phi(\m{Z})$ has second moment matrix $\m{\Delta} = \E(\m{X} \m{X}^\top)$ with full rank $d$.
\item (Bounded expectation) There exist universal constants $a,b \in \R$ such that, for all $\m{z}_1, \m{z}_2 \in \Z$, if $A \sim H(\m{z}_1, \m{z}_2)$, then
\begin{equation*}
	\E(A) = \phi(\m{z}_1)^\top \Ipq \phi(\m{z}_2) \in [a,b].
\end{equation*}
\item (Exponential tails) There exists universal constants $\alpha > 0$ and $\beta_{\rho} > 0$ for each $\rho \in \R$, such that for all $\m{z}_1, \m{z}_2 \in \Z$, if $A \sim H(\m{z}_1, \m{z}_2)$, then
\begin{equation*}
	\Prob \left\{|A| < \beta_\rho \log^\alpha(t) \right\} > 1 - t^{-\rho}.
\end{equation*}
\end{enumerate}

\end{assume}
Condition \emph{\textbf{i.}} is no real constraint, insisting only that the map $\phi$ is chosen sensibly; if the assumption fails, it is possible to find a map into $\R^{d'}$ with $d' < d$, where the relevant second moment matrix has full rank. %Conditions \emph{\textbf{ii.}} and \emph{\textbf{iii.}} are necessary restrictions on the distributions $H(\m{z}_1, \m{z}_2)$ for all $\m{z}_1, \m{z}_2 \in \Z$ for the following asymptotic theorems to hold.

Let $\left\| \cdot \right\|_{2 \to \infty}$ denote the two-to-infinity norm of a matrix \citep{two_to_infinity}, that is, the maximum row-wise Euclidean norm.

% conditional on latent positions $\m{Z}_1, \dots, \m{Z}_n$ satisfying \textbf{Assumption~\ref{Ass:Reg}}. 
\begin{thm}[Two-to-infinity norm]
\label{Thm:2Inf}
Let $\m{A}$ be an instance of a weighted generalized random dot product graph. There exists a sequence of matrices $\m{Q}_n \in \Opq$ such that
\begin{equation*}
	\left\| \m{X}_\m{A} \m{Q}_n - \m{X} \right\|_{2 \to \infty}  = 
	\mathrm{O} \left(n^{-1/2} \log^{3\alpha + 3/2}(n) \right).
\end{equation*}
\end{thm}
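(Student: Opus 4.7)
The strategy follows the analysis of the unweighted generalised random dot product graph in \citet{PRD18}, adapted to handle general exponentially-tailed weight distributions. The argument splits into three stages: a spectral concentration inequality for $\m{A} - \m{P}$, control of the signal eigenvalues of $\m{P}$, and a row-wise refinement that realises the $n^{-1/2}$ rate.

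For the first stage I would establish $\|\m{A}-\m{P}\|_2 = \mathrm{O}(n^{1/2}\log^{\alpha+1/2}(n))$ with high probability. Because the entries are not bounded, I would truncate: define $\widetilde{\m{A}}_{ij} = \m{A}_{ij}\I\{|\m{A}_{ij}|\le \beta_\rho \log^\alpha(n)\}$, which agrees with $\m{A}_{ij}$ simultaneously over all $i<j$ with probability at least $1-\mathrm{O}(n^{2-\rho})$ by Assumption \textbf{1.iii} and a union bound, for $\rho$ chosen large. Since $\E(\m{A}_{ij}) = \m{P}_{ij}$ is uniformly bounded by Assumption \textbf{1.ii}, the truncation bias $\|\E(\widetilde{\m{A}})-\m{P}\|_F$ is negligible, and a matrix Bernstein inequality applied to the centred truncated entries, whose magnitude is $\mathrm{O}(\log^\alpha(n))$, delivers the spectral-norm bound. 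For the second stage, Assumption \textbf{1.i} and the law of large numbers give $\tfrac{1}{n}\m{X}^\top\m{X}\to \m{\Delta}\succ 0$, so the $d$ nonzero eigenvalues of $\m{P} = \m{X}\Ipq\m{X}^\top$ are all of order $\Theta(n)$ with high probability. This furnishes an eigengap of order $n$ between the signal and the bulk, on the correct scale to combine with stage one.

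For the third stage I would carry out a series expansion of $\m{X}_\m{A}\m{Q}_n - \m{X}$ in the perturbation $\m{A}-\m{P}$, using an eigenvalue-matched alignment $\m{Q}_n \in \Opq$ constructed block-wise below. The leading linear term reduces to $(\m{A}-\m{P})\m{X}\m{M}$ for some bounded matrix $\m{M}$ built from the spectral factors of $\m{P}$. Its $i$-th row is a sum of $n$ conditionally independent mean-zero entries of size $\mathrm{O}(\log^\alpha(n))$ after truncation, so a Hoeffding-type concentration applied row-by-row with a union bound over the $n$ rows gives row-norm $\mathrm{O}(n^{-1/2}\log^{\alpha+1}(n))$. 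Higher-order terms are handled via repeated use of the stage-one spectral bound together with the order-$n$ eigengap; tracking the logarithmic factors carefully, each additional power contributes an extra $\log^{\alpha+1/2}(n)$, delivering the stated $\log^{3\alpha+3/2}(n)$ rate.

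The main obstacle is the indefinite orthogonal alignment. Since $\Opq$ is noncompact, $\m{Q}_n$ is not a priori bounded, and a two-to-infinity guarantee cannot absorb an unbounded factor on the right. I would construct $\m{Q}_n$ as in \citet{PRD18}: split $\m{U}_\m{A}|\m{\Lambda}_\m{A}|^{1/2}$ and $\m{U}_\m{P}|\m{\Lambda}_\m{P}|^{1/2}$ according to the signs of their top-$d$ eigenvalues into $p$- and $q$-dimensional blocks, apply Davis--Kahan separately on each sign-subspace (each of which enjoys an order-$n$ eigengap because $\m{\Delta}^{1/2}\Ipq\m{\Delta}^{1/2}$ has exactly $p$ positive and $q$ negative eigenvalues under minimal dimensionality), and glue the two resulting orthogonal alignments into a single element of $\Opq$ through $\Ipq$. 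Verifying that the sign assignment is correct with high probability, and that the resulting $\m{Q}_n$ has bounded operator norm uniformly in $n$, is the technical heart of the argument.
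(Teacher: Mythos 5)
Your proposal follows essentially the same route as the paper's proof: a matrix Bernstein bound giving $\|\m{A}-\m{P}\| = \mathrm{O}\left(n^{1/2}\log^{\alpha+1/2}(n)\right)$, Weyl-type control showing the signal eigenvalues of $\m{P}$ (and hence $\m{A}$) are of order $n$, a decomposition of $\m{X}_\m{A}\m{Q}_n - \m{X}$ whose leading term has the form $(\m{A}-\m{P})\m{X}\m{M}$ and is handled by row-wise Hoeffding concentration with a union bound over rows, with the remaining terms controlled by the spectral bound and eigengap. The only real difference is presentational: you construct the block-diagonal orthogonal alignment via sign-split Davis--Kahan, whereas the paper obtains the same matrix $\m{W} \in \mathbb{O}(d)\cap\Opq$ as the solution of a modified one-mode Procrustes problem (followed by the bounded $\m{Q}_\m{X}\in\Opq$ taking $\m{X}_\m{P}$ to $\m{X}$); both rest on the same block structure of $\m{U}_\m{P}^\top\m{U}_\m{A} + \Ipq\m{U}_\m{P}^\top\m{U}_\m{A}\Ipq$, so this is not a genuinely different argument.
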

Let $\Phi \{ \m{\mu}, \m{\Sigma}\}$ denote the multivariate normal cumulative distribution function with mean $\mu$ and covariance matrix $\m{\Sigma}$.
\begin{thm}[Central limit theorem]
\label{Thm:CLT}
Let $\m{A}$ be an instance of a weighted generalized random dot product graph. Given $\m{z} \in \Z$, define the covariance-valued function
\begin{equation*}
  \m{\Sigma}(\m{z}) = \Ipq \m{\Delta}^{-1} \E \left\{ v(\m{z}, \m{Z}) \phi(\m{Z}) \phi(\m{Z})^\top \right\} \m{\Delta}^{-1} \Ipq,
\end{equation*}
where $\m{Z} \sim F$. Then, there exists a sequence of matrices $\m{Q}_n \in \Opq$ such that, for all $\m{x} \in \R^d$, 
\begin{equation*}
	\Prob \left\{ n^{1/2} \left( \m{X}_\m{A} \m{Q}_n - \m{X} \right)_i^\top \leq \m{x}
	~|~ \m{Z}_i = \m{z} \right\} \to \Phi \left\{ \m{x}, \m{\Sigma}(\m{z}) \right\}.
\end{equation*}
\end{thm}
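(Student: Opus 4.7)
The strategy is the standard two-step approach for spectral-embedding central limit theorems: first expose a linear principal term in the row-wise residual $\m{X}_\m{A}\m{Q}_n - \m{X}$, apply a classical multivariate CLT to it, and then show the remainder is negligible at the $n^{-1/2}$ scale using Theorem~\ref{Thm:2Inf} together with finer perturbation bounds. I begin by fixing the alignment. Since $\m{P}$ has inertia $(p,q)$, after permuting eigenvectors so that $\operatorname{sign}(\m{\Lambda}_\m{P}) = \Ipq$, the two symmetric rank-$d$ factorisations $\m{P} = \m{X}\Ipq\m{X}^\top = \m{X}_\m{P}\Ipq\m{X}_\m{P}^\top$ differ by a matrix $\m{V}_n \in \Opq$ with $\m{X}_\m{P} = \m{X}\m{V}_n$. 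The matrix $\m{Q}_n \in \Opq$ is chosen to absorb both $\m{V}_n$ and the orthogonal Procrustes rotation $\m{W}_n$ aligning $\m{U}_\m{A}$ with $\m{U}_\m{P}$. Using the eigenvector identity $\m{A}\m{U}_\m{A} = \m{U}_\m{A}\m{\Lambda}_\m{A}$ and a Neumann-series / Davis--Kahan calculation, together with $\sigma_d(\m{P}) = \Theta(n)$ (a consequence of Assumption~\ref{Ass:Reg}.i) and $\|\m{A}-\m{P}\| = \mathrm{O}(n^{1/2}\log^c n)$ (matrix Bernstein under Assumption~\ref{Ass:Reg}.iii), I would expand
\begin{equation*}
  (\m{X}_\m{A}\m{Q}_n - \m{X})_i^\top = n^{-1}\Ipq\m{\Delta}^{-1}\sum_{j \ne i}(\m{A}_{ij} - \m{P}_{ij})\phi(\m{Z}_j) + \m{r}_{n,i},
\end{equation*}
where $\m{r}_{n,i}$ gathers the higher-order perturbation residuals together with the law-of-large-numbers error in approximating $(n^{-1}\m{X}^\top\m{X})^{-1}$ by $\m{\Delta}^{-1}$.

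The linear term is handled by the classical multivariate CLT. Conditional on $\m{Z}_i = \m{z}$, the summands $(\m{A}_{ij} - \m{P}_{ij})\phi(\m{Z}_j)$ are iid with mean zero (computing the expectation first over $\m{A}_{ij}\mid\m{Z}_j$ and then over $\m{Z}_j \sim F$) and covariance matrix $\E\{v(\m{z},\m{Z})\phi(\m{Z})\phi(\m{Z})^\top\}$, and Assumption~\ref{Ass:Reg}.iii furnishes the tail control needed for Lindeberg's condition. Hence $n^{-1/2}\sum_{j \ne i}(\m{A}_{ij} - \m{P}_{ij})\phi(\m{Z}_j)$ converges in distribution to $\mathcal{N}(\m{0}, \E\{v(\m{z},\m{Z})\phi(\m{Z})\phi(\m{Z})^\top\})$, and left-multiplication by the deterministic matrix $\Ipq\m{\Delta}^{-1}$ yields precisely the stated covariance $\m{\Sigma}(\m{z})$.

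The main obstacle is showing $n^{1/2}\|\m{r}_{n,i}\| \to 0$ in probability. Theorem~\ref{Thm:2Inf} already provides $\|\m{X}_\m{A}\m{Q}_n - \m{X}\|_{2\to\infty} = \mathrm{O}(n^{-1/2}\log^{3\alpha+3/2}(n))$, but this bounds the full residual rather than the remainder after linearisation. To close the gap, I would iterate the eigenvector perturbation identity once more, controlling the second-order term using the two-to-infinity perturbation framework of \citet{two_to_infinity}. The resulting quadratic-in-perturbation terms satisfy a row-wise bound $\mathrm{O}(\|\m{A}-\m{P}\|^2 \|\m{X}\|_{2\to\infty} / \sigma_d(\m{P})^2) = \mathrm{O}(n^{-1}\log^{c'}n)$, which is $o(n^{-1/2})$. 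A Slutsky argument handles the $\mathrm{O}(n^{-1/2})$ LLN error in replacing $(n^{-1}\m{X}^\top\m{X})^{-1}$ by $\m{\Delta}^{-1}$, since this error multiplies a quantity that is itself $\mathrm{O}(n^{-1/2})$ in probability, producing only an $\mathrm{O}(n^{-1})$ contribution that is absorbed into $\m{r}_{n,i}$.
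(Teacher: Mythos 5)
Your outline reproduces the architecture of the paper's own proof: the same alignment $\m{Q}_n = \m{W}^\top\m{Q}_\m{X}^{-1}$ combining a Procrustes rotation with the indefinite-orthogonal matrix relating $\m{X}_\m{P}$ to $\m{X}$; the same linearisation, whose principal term row-wise is $\Ipq(n^{-1}\m{X}^\top\m{X})^{-1}\,n^{-1}\sum_{j\ne i}(\m{A}_{ij}-\m{P}_{ij})\m{X}_j$; the same conditional i.i.d.\ multivariate CLT yielding the covariance $\E\{v(\m{z},\m{Z})\phi(\m{Z})\phi(\m{Z})^\top\}$; and the same Slutsky step replacing the empirical second-moment matrix by $\m{\Delta}^{-1}$. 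Those parts of your argument are correct and match the paper.

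The gap is in your remainder bound, which is precisely the hard part of the theorem. You assert that iterating the perturbation identity gives row-wise second-order terms of size $\mathrm{O}\left(\|\m{A}-\m{P}\|^2\|\m{X}\|_{2\to\infty}/\sigma_d(\m{P})^2\right) = \mathrm{O}\left(n^{-1}\log^{c'}n\right)$, but no argument you describe delivers this. The critical residual is the cross term
\begin{equation*}
\m{R}_2 = (\m{I}-\m{U}_\m{P}\m{U}_\m{P}^\top)(\m{A}-\m{P})(\m{U}_\m{A}\Ipq-\m{U}_\m{P}\Ipq\m{W})\m{\Sigma}_\m{A}^{-1/2}.
\end{equation*}
Because $\m{U}_\m{A}$ is a function of the whole matrix $\m{A}$, including row $i$, you cannot apply row-wise concentration (Hoeffding/Bernstein) to $(\m{A}-\m{P})(\m{U}_\m{A}\Ipq-\m{U}_\m{P}\Ipq\m{W})$, and the only bound available from the quantities you invoke (spectral norms plus Davis--Kahan) is
\begin{equation*}
\|\m{A}-\m{P}\|\,\|\m{U}_\m{A}\Ipq-\m{U}_\m{P}\Ipq\m{W}\|\,\|\m{\Sigma}_\m{A}^{-1/2}\| = \mathrm{O}\left(n^{1/2}\log^{\alpha+1/2}(n)\right)\cdot\mathrm{O}\left(n^{-1/2}\log^{2\alpha+1}(n)\right)\cdot\mathrm{O}\left(n^{-1/2}\right),
\end{equation*}
i.e.\ $\mathrm{O}\left(n^{-1/2}\log^{3\alpha+3/2}(n)\right)$, which after multiplication by $n^{1/2}$ diverges logarithmically rather than vanishing. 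The paper closes exactly this hole with an idea absent from your sketch: the rows of $\m{R}=(\m{A}-\m{P})(\m{I}-\m{U}_\m{P}\m{U}_\m{P}^\top)\m{U}_\m{A}\m{U}_\m{A}^\top$ are exchangeable (by permutation invariance of the i.i.d.\ latent positions), so $\E\left(\|\m{R}\|_F^2\right) = n\,\E\left(\|\m{R}_i\|^2\right)$, and Markov's inequality converts the Frobenius bound $\|\m{R}\|_F = \mathrm{O}\left(\log^{2\alpha+1}(n)\right)$ into a row-wise rate, giving $\|\m{R}_2\|_{2\to\infty} = \mathrm{O}\left(n^{-3/4}\log^{3\alpha+3/2}(n)\right)$ --- note $n^{-3/4}$, not the $n^{-1}$ you claim (see \textbf{Proposition~\ref{residual_bounds}}). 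Without this exchangeability (or some comparable leave-one-out/decoupling) device, the remainder is not $o(n^{-1/2})$ row-wise and the convergence in distribution does not follow. A secondary point: the Procrustes matrix must be constructed to lie in $\mathbb{O}(d)\cap\Opq$, which is why the paper solves the modified two-term Procrustes problem of \textbf{Proposition~\ref{procrustes_bound}}; a plain orthogonal Procrustes solution need not commute with $\Ipq$, and that commutation is needed both for $\m{Q}_n\in\Opq$, as the theorem statement requires, and for moving $\m{W}$ past $\Ipq$ and $\m{\Sigma}_\m{P}^{-1/2}$ when extracting your principal term.
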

An oracle would construct the matrix $\m{Q}_n$ in two stages; first using a modified Procrustes-style procedure to align $\m{X}_\m{A}$ with $\m{X}_\m{P}$, and then applying a second transformation to align $\m{X}_\m{P}$ with $\m{X}$.  For the first step, let $\m{U}_\m{P}^\top\m{U}_\m{A} + \Ipq\m{U}_\m{P}^\top\m{U}_\m{A}\Ipq$ admit the singular value decomposition
\begin{equation*}
\m{U}_\m{P}^\top\m{U}_\m{A} + \Ipq\m{U}_\m{P}^\top\m{U}_\m{A}\Ipq = \m{W}_1\m{\Sigma}\m{W}_2^\top,
\end{equation*}
and let $\m{W} = \m{W}_1\m{W}_2^\top$. The matrix $\m{W}$ solves the one mode \emph{orthogonal} Procrustes problem
\begin{equation*}
\m{W} = \argmin_{\m{Q} \in \mathbb{O}(d)} \|\m{U}_\m{A} - \m{U}_\m{P}\m{Q}\|_F^2 + \|\m{U}_\m{A}\Ipq - \m{U}_\m{P}\Ipq\m{Q}\|_F^2,
\end{equation*}
where $\m{W}$ in $\mathbb{O}(d) \cap \mathbb{O}(p,q)$ since $\m{U}_\m{P}^\top\m{U}_\m{A} + \Ipq\m{U}_\m{P}^\top\m{U}_\m{A}\Ipq$ is a block-diagonal matrix. We use $\m{W}^\top$ to align $\m{X}_\m{A}$ with $\m{X}_\m{P}$, that is, $\m{X}_\m{A} \m{W}^\top \approx \m{X}_\m{P}$.

For the second step, observe that since $\m{X}_\m{P}\Ipq\m{X}_\m{P}^\top = \m{P} = \m{X}\Ipq\m{X}^\top$, there exists some matrix $\m{Q}_\m{X} \in \mathbb{O}(p,q)$ such that $\m{X}_\m{P} = \m{X}\m{Q}_\m{X}$, so that $\m{X}_\m{A} \m{W}^\top \m{Q}_\m{X}^{-1} \approx \m{X}$. The matrix $\m{Q}_{n}$ is then given by $\m{Q}_{n} = \m{W}^\top \m{Q}_\m{X}^{-1}$.

%The proofs of \textbf{Theorems~\ref{Thm:2Inf}} and \textbf{\ref{Thm:CLT}} are given in the Appendix.

% \subsection{Weighted stochastic block model}
% \label{Sec:WSBM}

Since the weighted stochastic block model is a special case of the weighted generalized random dot product graph, Theorem~\ref{Thm:CLT} can be used to describe the asymptotic behavior of the corresponding embedding.

\begin{cor}[Weighted stochastic block model central limit theorem]
\label{Cor:WSBM_CLT}
Let $\m{A}$ be an instance of a weighted stochastic block model satisfying Assumption~\ref{Ass:Reg}. Given $k \in \{1, \ldots, K \}$ and $\m{Z} \sim F$, let
\begin{equation*}
	\m{\Sigma}_k = \Ipq \m{\Delta}^{-1} \left\{ \sum_{\ell=1}^{K} \pi_\ell \m{C}_{k \ell} (\m{X}_{\m{B}})_\ell (\m{X}_{\m{B}})_\ell^\top \right\} \m{\Delta}^{-1} \Ipq,
\end{equation*} 
then Theorem~\ref{Thm:CLT} implies that, for all $\m{x} \in \R^d$, 
\begin{equation*}
	\Prob \left\{ n^{1/2} \left( \m{X}_\m{A} \m{Q}_n - \m{X} \right)_i^\top \leq \m{x}
	~|~ \m{Z}_i = k \right\} \to \Phi \left( \m{x}, \m{\Sigma}_k \right).
\end{equation*}
\end{cor}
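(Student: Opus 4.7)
The plan is to specialise Theorem 2 to the weighted stochastic block model using the canonical embedding $\phi$ constructed in Section 3 from the eigendecomposition of the block mean matrix $\m{B}$, and then to rewrite the covariance $\m{\Sigma}(\m{z})$ of Theorem 2 in the block-structured form $\m{\Sigma}_k$. Since the WSBM has already been exhibited as a WGRDPG with signature $(p,q)$ via $\phi(k) = (\m{X}_\m{B})_k$ and $\m{B} = \m{X}_\m{B} \Ipq \m{X}_\m{B}^\top$, Theorem 2 applies directly provided its hypotheses hold, and these are subsumed by the corollary's assumption that the WSBM satisfies Assumption 1.

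The substantive step is a direct substitution. The latent distribution $F = \sum_{\ell=1}^K \pi_\ell \delta_\ell$ is discrete, so for $\m{Z} \sim F$,
\begin{equation*}
  \E\left\{ v(k, \m{Z})\, \phi(\m{Z})\, \phi(\m{Z})^\top \right\} = \sum_{\ell=1}^K \pi_\ell\, v(k,\ell)\, \phi(\ell)\, \phi(\ell)^\top.
\end{equation*}
By definition of the block variance matrix, $v(k,\ell) = \m{C}_{k\ell}$, and by the canonical construction $\phi(\ell) = (\m{X}_\m{B})_\ell$. Substituting into the formula for $\m{\Sigma}(\m{z})$ given in Theorem 2, evaluated at $\m{z} = k$, yields exactly the expression
\begin{equation*}
  \m{\Sigma}_k = \Ipq \m{\Delta}^{-1} \left\{ \sum_{\ell=1}^K \pi_\ell \m{C}_{k\ell} (\m{X}_\m{B})_\ell (\m{X}_\m{B})_\ell^\top \right\} \m{\Delta}^{-1} \Ipq
\end{equation*}
claimed in the corollary, and the convergence statement reduces immediately to the conclusion of Theorem 2 applied to the event $\{\m{Z}_i = k\}$, which has positive probability $\pi_k$ so that the conditioning is well-defined.

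There is essentially no obstacle; the corollary is bookkeeping rather than new analysis. The only point worth noting in passing is that the oracle sequence $\m{Q}_n$ is the same one supplied by Theorem 2 and acts on the canonical $\phi$ coordinates used throughout Section 3, so no additional alignment argument is required, and the second moment matrix $\m{\Delta} = \sum_{k=1}^K \pi_k (\m{X}_\m{B})_k (\m{X}_\m{B})_k^\top$ inherits full rank $d$ from the minimal dimensionality part of Assumption 1.
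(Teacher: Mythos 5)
Your proposal is correct and matches the paper's own treatment: the paper presents this corollary as an immediate specialisation of Theorem~\ref{Thm:CLT} via the canonical construction $\phi(\ell) = (\m{X}_\m{B})_\ell$, with the covariance formula obtained exactly as you do, by evaluating the expectation in $\m{\Sigma}(\m{z})$ over the discrete distribution $F = \sum_\ell \pi_\ell \delta_\ell$ and substituting $v(k,\ell) = \m{C}_{k\ell}$. The bookkeeping observations you add (positive probability of the conditioning event, full rank of $\m{\Delta}$, reuse of the same oracle sequence $\m{Q}_n$) are consistent with the paper's discussion following the corollary.
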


Of the regularity conditions specified in Assumption~\ref{Ass:Reg}, condition \emph{\textbf{i.}} is satisfied by our construction of $\m{X}_\m{B}$ and choice of $d = \rank(\m{B})$. The remaining conditions depend on the family of distributions $\{H(\m{z}_1, \m{z}_2): \m{z}_1, \m{z}_2 \in \Z\}$. Condition \emph{\textbf{ii.}} is satisfied if the means (the entries of $\m{B}$) are finite, while condition \emph{\textbf{iii.}} holds for many common distributions considered in this article, such as the Gaussian and Beta distributions.

The implication of Corollary~\ref{Cor:WSBM_CLT} is that the adjacency spectral embedding of a graph following a weighted stochastic block model asymptotically produces a point cloud that is a linear transformation of independent, identically distributed draws from a Gaussian mixture model with $K$ components. The community structure (partition of the nodes) found by Gaussian clustering is invariant to a full-rank, linear transformation of the points, so that unidentifiability of $\smash{\m{Q}_n^{-1} \in \Opq}$ is immaterial for community recovery \citep{rubin2021statistical}. The $K$-means clustering algorithm \citep{steinhaus1956division} is less appropriate, since it implicitly assumes spherical covariance matrices and is not invariant to the action of $\m{Q}_n^{-1}$.

We conclude this section with an example which counters the common notion that spectral embedding only uncovers community structure in the mean matrix $\m{P} = \E(\m{A} \mid \m{Z}_1, \ldots, \m{Z}_n)$ \citep[Section 4]{xu2020optimal}.

\subsection{Gaussian distributions with equal means}
\label{sec:gaussian_sbm}

%%% Reworded this example using my thesis %%%

Consider a weighted stochastic block model with two communities where matrix entries are Gaussian random variables with symmetric family of distributions $\{H(\m{z}_1, \m{z}_2): \m{z}_1, \m{z}_2 \in \Z\}$, where
\begin{equation*}
    H(\m{z}_1, \m{z}_2) = \textrm{Normal}(1, \sigma^2_{\m{z}_1 \m{z}_2}).
\end{equation*}
The mean of each distribution is the same regardless of the community assignment, only the variance changes. The block mean matrix $\m{B}$ is equal to the all-one matrix and $\rank(\m{B}) = 1$ meaning the function $\phi$ will map $\Z = \{1, 2\}$ into $\R$. Furthermore, the spectral embedding $\m{X}_\m{B}$ has identical rows meaning that $\m{X}_i = \phi(\m{Z}_i)$ is the same for all $i$. The implication of Theorem~\ref{Thm:2Inf} is that point clouds representing each community will concentrate about the same position. While this does not bode well for separating the communities, Corollary~\ref{Cor:WSBM_CLT} demonstrates that the point cloud can nevertheless be distinguished as the combination of two communities, rather than one.

We generate a weighted stochastic block model with $n = 1000$ nodes, $\sigma^2_{k \ell} = 2$ if $k = 1, \ell = 1$, and $\sigma^2_{k \ell} = 1$ otherwise, with community assignment distribution $F$ such that $\pi_1 = 1/2$ and $\pi_2 = 1/2$. Figure~\ref{Fig:ASE_Gauss_2SBM} shows the spectral embedding of the weighted stochastic block model in $d = 1$ dimensions, the rank of the block mean matrix.
\begin{figure}[ht]
	\centering
	\includegraphics[width=0.33\textwidth]{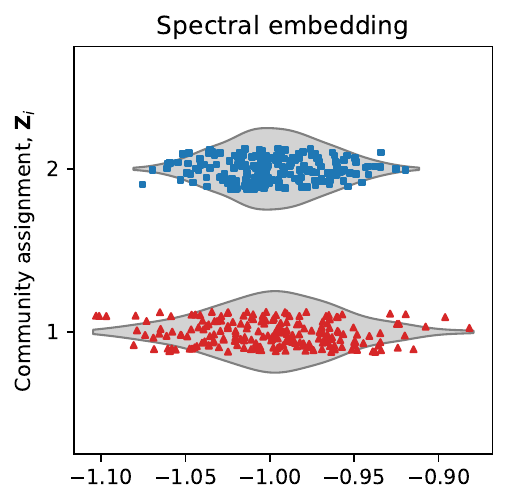}
	\caption{The embedding of a two-community weighted stochastic block model into $d = 1$ dimensions. The violin plot show the empirical distribution of the points separated by their true community assignment $\m{Z}_i$. A sample of 200 jittered points are shown colored according to their true community assignment $\m{Z}_i$.}
	\label{Fig:ASE_Gauss_2SBM}
\end{figure}
The empirical distributions for the two communities are approximately centred around -1.0 but the variances are different; the embedding for nodes with $\m{Z}_i = 1$ has sample variance 0.0017 while the embedding for nodes with $\m{Z}_i = 2$ has sample variance 0.0010. These values are very close to the covariance terms in the central limit theorem given by Corollary~\ref{Cor:WSBM_CLT}, $\m{\Sigma}_1 / n = 0.0015$ and $\m{\Sigma}_2 /n = 0.001$.

\section{Edge weight representation}
\label{Sec:CI}
In a weighted graph, the specific numeric weight assigned to each edge can often seem arbitrary. Indeed, if $\m{A}$ follows a weighted stochastic block model then so does an entry-wise transformation of $\m{A}$, yet the resulting embedding is changed, raising the question of which to use. We now describe a theoretical framework to compare representations, which is used in Section~\ref{Sec:Applications} to provide guidance about different transformations to consider, depending on the application. While we do not advise against a reasonable level of data familiarisation in practice, we should make clear that our theory does not formally allow a ``data-dependent'' choice.

%, raising the question of which to use. 

%The asymptotic result of \textbf{Corollary~\ref{Cor:WSBM_CLT}} helps us choose between different edge weight representations.

\subsection{Chernoff information for weighted stochastic block models}

Under a stochastic block model, a natural way to evaluate a proposed representation of the edge weights is to measure how well communities are separated in the resulting embedding. We follow the rationale of \citet{tang2018limit} where, in a simple graph setting, Chernoff information was proposed as a measure of community separation for comparing spectral embeddings corresponding to the graph's adjacency and normalized Laplacian matrices. \citet{tang2018limit} make the point that a simpler criterion such as cluster variance is not satisfactory, since it effectively measures cluster separation assuming $K$-means clustering, inappropriate under a Gaussian mixture model with non-spherical components, as occurs here by Corollary~\ref{Cor:WSBM_CLT}. 

%the performance of $K$-means clustering rather than clustering using a Gaussian mixture model.
% In a two class classification problem, the Chernoff information provides an upper bound on the probability of error of the Bayes decision rule that assigns each data point to its most likely cluster \emph{a posteriori}.

% In their work motivating the use of Chernoff information to compare graph embeddings, \citet{tang2018limit} make the point that a simpler criterion such as cluster variance is not satisfactory, since it is effectively measuring the performance of $K$-means clustering rather than clustering using a Gaussian mixture model. However, $K$-means clustering assumes that each cluster has an equal spherical covariance, which is in contrast to the results of \textbf{Corollary~\ref{Cor:WSBM_CLT}}.

%%% Change notation for Chernoff information? CI? %%%

\begin{defn}[Chernoff information]
\label{Def:CI}
Let $F_1$ and $F_2$ be continuous multivariate distributions on $\R^d$ with density functions $f_1$ and $f_2$ respectively. The \emph{Chernoff information} \citep{chernoff1952measure} is defined by
\begin{equation*}
	\mathcal C(F_1,F_2) = \sup_{t \in (0,1)}  \left\{ - \log \int_{\R^d} f_1^t(\m{x}) f_2^{1-t}(\m{x}) \ud \m{x} \right\},
\end{equation*}
When given three or more distributions, one reports the Chernoff information of the critical pair, $\min_{k \neq \ell} \mathcal C(F_k,F_\ell)$.
\end{defn}
The Chernoff information is the exponential rate at which the Bayes error of classifying a sample to be from $F_1$ or $F_2$ decreases asymptotically. Chernoff information is unaffected by a full rank linear transformation of the sample, in particular, by distortions by $\m{Q} \in \Opq$ which can occur under spectral embedding.

For Gaussian distributions with $F_1=\mathcal{N}(\boldsymbol{\mu}_1,\m{\Gamma}_1)$ and $F_2 = \mathcal{N}(\boldsymbol{\mu}_2,\m{\Gamma}_2)$, the Chernoff information is given by \citep{P05},
\begin{equation*}
	\mathcal C(F_1,F_2) = \sup_{t \in (0,1)} \left\{ \frac{t(1-t)}{2} (\boldsymbol{\mu}_1 - \boldsymbol{\mu}_2)^\top \m{\Gamma}_{12}^{-1}(t) (\boldsymbol{\mu}_1 - \boldsymbol{\mu}_2) + \frac{1}{2} \log \frac{|\m{\Gamma}_{12}(t)|}{|\m{\Gamma}_1|^{1-t} |\m{\Gamma}_2|^t} \right\},
	\label{Eq:Chern_Norm}
\end{equation*}
where $\m{\Gamma}_{12}(t) = (1-t) \m{\Gamma}_1 + t \m{\Gamma}_2$.

Applying this to the limiting mixture model that occurs under the stochastic block model, we find
\begin{equation*}
\mathcal C = \min_{k \neq \ell}  \sup_{t \in (0,1)} \left[ \frac{n t(1-t)}{2} \left\{ (\m{X}_{\m{B}})_k - (\m{X}_{\m{B}})_\ell \right\}^\top \m{\Sigma}_{k \ell}^{-1} (t) \left\{ (\m{X}_{\m{B}})_k - (\m{X}_{\m{B}})_\ell \right\} + \frac{1}{2} \log \frac{|\m{\Sigma}_{k \ell}(t)|}{|\m{\Sigma}_k|^{1-t} |\m{\Sigma}_{\ell}|^t} \right],
\end{equation*}
where $\m{\Sigma}_{k \ell}(t) = (1-t) \m{\Sigma}_k + t \m{\Sigma}_{\ell}$ is the convex combination of the covariance matrices $\m{\Sigma}_k$ and $\m{\Sigma}_{\ell}$ defined in Corollary~\ref{Cor:WSBM_CLT}. As the number of nodes in a weighted graph increases, the Chernoff information is dominated by the first term, growing linearly with $n$, and the task of distinguishing the block model communities becomes easier. In line with previous studies, we take the limit of the above scaled by $n^{-1}$ to obtain a measure of community separation accounting for network growth \citep{mu2020identifying}.

\begin{defn}[Size-adjusted Chernoff information]
\label{Def:SACI}
Given a weighted stochastic block model that satisfies the conditions necessary for Corollary~\ref{Cor:WSBM_CLT}, define \emph{size-adjusted Chernoff information},
\begin{equation*}
	C = \min_{k \neq \ell}  \sup_{t \in (0,1)} \left[ \frac{t(1-t)}{2} \left\{ (\m{X}_{\m{B}})_k - (\m{X}_{\m{B}})_\ell \right\}^\top \m{\Sigma}_{k \ell}^{-1} (t) \left\{ (\m{X}_{\m{B}})_k - (\m{X}_{\m{B}})_\ell \right\} \right].
\end{equation*}
\end{defn}

Returning to the Gaussian stochastic block model with equal means example in Section~\ref{sec:gaussian_sbm}, the size-adjusted Chernoff information is equal to zero. This reflects the fact that the task of separating the communities does not get arbitrarily easy as $n \rightarrow \infty$. However, when the block mean matrix $\m{B}$ of a weighted stochastic block model is full rank, the size-adjusted Chernoff information calculation is greatly simplified. % The proof of Lemma~\ref{Lem:SACI} is given in the Appendix.

\begin{lem}
\label{Lem:SACI}
For a $K$-community stochastic block model with full rank mean block matrix, $\rank(\m{B}) = K$, the size-adjusted Chernoff information is
\begin{equation*}
    C = \min_{k \neq \ell}  \sup_{t \in (0,1)} \left[ \frac{t(1-t)}{2} \left\{ (\m{e}_k - \m{e}_\ell)^\top \m{B} \m{\Pi} \m{S}_{k \ell}(t)^{-1} \m{B} (\m{e}_k - \m{e}_\ell) \right\} \right],
\end{equation*}
where $\m{e}_k \in \R^K$ is the standard basis vector with one in position $k$, and zero elsewhere, and $\m{S}_{k \ell}(t) = (1-t) \diag(\m{C}_k) + t \diag(\m{C}_\ell) \in \R^{K \times K}$ is a convex combination of the diagonal matrices consisting of rows of the block variance matrix $\m{C}$.
\end{lem}

Rather than having to compute the spectral embedding $\m{X}_\m{B}$ and the second moment matrix $\m{\Delta}$, the size-adjusted Chernoff information can be expressed directly in terms of the block mean and variance matrices $\m{B}$ and $\m{C}$. This makes the optimization steps in the computation much more manageable. If the block mean matrix $\m{B}$ is not full rank, the right hand side is an upper bound for the size-adjusted Chernoff information.

Chernoff information increases if \emph{any} term in $\m{C}$ decreases; and decreases if \emph{all} terms in $\m{B}$ decrease by the same factor. Heuristically, if we can achieve a large reduction in the variances without reducing the means too much, we should expect better separation of communities. In applications where $\m{A}$ has extreme values, such as large counts (see Section~\ref{Sec:Airport}), we therefore recommend applying some form of tempering (e.g. $\m{A}_{ij} \rightarrow \m{A}^{1/2}_{ij}$ or $\m{A}_{ij} \rightarrow \mathbb{I}(\m{A}_{ij} > 0) \log \m{A}_{ij}$).

There is a conjecture about the limiting behavior of $\m{X}_\m{A}$ under the stochastic block model when the embedding dimension is chosen to be larger than $d = \rank(\m{B})$ \citep{yang2020simultaneous}. In the extraneous dimensions, the points of $\m{X}_\m{A}$ corresponding to a community are independent and identically distributed isotropic Gaussian with some community-specific variance. The implication of this conjecture, if true for weighted stochastic block models, would be that there is no change in size-adjusted Chernoff information when embedding into a larger dimension. On the other hand, the size-adjusted Chernoff information typically decreases and cannot increase if the embedding dimension is chosen smaller than $d$. These observations would support existing recommendations `to err on the large side' when choosing an embedding dimension \citep{athreya2017statistical}.

% In the worst scenario, the size-adjusted Chernoff information can equal zero if the community means are projected to the same point in lower dimension. On the basis of these observations, 

% we strongly support the recommendation to err on choosing $d$ on the large side.

% Given two weighted stochastic block models with positive size-adjusted Chernoff information $C_1$ and $C_2$ respectively, define the Chernoff ratio $C^* = C_1/C_2$, coinciding with the Chernoff information ratio before size adjustment in the limit $n \rightarrow \infty$. If $C^* > 1$ (respectively, $C^* < 1$), then we regard community separation under the first (respectively, second) weighted stochastic block model as better. This allows comparison of different edge weight representations, including the effectiveness of thresholding or taking the logarithm of the matrix entries (see Section~\ref{Sec:matrix_rep}).

\subsection{Invariance under affine transformation}
\label{Affine}
Among the weight transformations that could be considered, one might be particularly concerned about the effect of an implicit choice of origin and scale --- for example, whether to measure temperatures in Fahrenheit or Celsius in a heat network. The following result shows that community separation, as measured through size-adjusted Chernoff information, is not affected by this choice. % The proof of Lemma~\ref{Thm:Affine_SBM} and demonstration of Remark~\ref{Rem:Sign}, to follow, are given in the Appendix.

%%% New version of affine transformation result %%%

\begin{lem}[Chernoff information invariance under affine transformation]
\label{Thm:Affine_SBM}
Let $\m{A}$ be a weighted stochastic block model with full rank block mean matrix $\m{B}$ and size-adjusted Chernoff information $C$. For the affine entry-wise transformation $\m{A}'_{ij} = a \m{A}_{ij} + b$, if the block mean matrix $\m{B}'$ is full rank, then $\m{A}'$ has size-adjusted Chernoff information $C$.
\end{lem}

One of the main practical implications of this result is that we can often choose a sparse matrix representation, for example, the $1-$p-values matrix in Section~\ref{Sec:p-values}.

\section{Conclusion}
\label{Sec:Conc}
This article provides a statistical interpretation of the spectral embedding of a weighted graph. Understanding the asymptotic distribution of this embedding allows an informed choice between different representations of edge weights. These determinations are made under a weighted stochastic block model, using size-adjusted Chernoff information as a way to measure the quality of community separation. For example, embedding the matrix of p-values gives inferior results to using threshold or log p-values, under a broad parameter regime. 

% We have identified cases where this choice is asymptotically immaterial, for example, when the adjacency matrices are equal up to affine transformation. In other cases, we find that the choice matters, for example, in a network anomaly detection problem, embedding the matrix of p-values gives inferior results to using threshold or log p-values. 

% Although the stochastic block model 

% While most discussion focusses on the stochastic block model, 

% one should note that the asymptotic results cover any random graph model where there exists some representation of the edge weights for which the expected adjacency matrix has low rank.

\bibliographystyle{apalike}
\bibliography{WGRDPG_arXiv}
\newpage
\setcounter{section}{0}
\setcounter{page}{1}
\begin{center} \Large \bf Appendix \end{center}

\section{Proofs of consistency and the central limit theorem}
We begin by establishing some standard results regarding the asymptotic behavior of the singular values of $\m{P}$, $\m{A}$ and $\m{A}-\m{P}$. Recall that for $\m{Z} \sim F$, the minimal dimensionality condition from Assumption~\ref{Ass:Reg} states that the random vector $\m{X} = \phi(\m{Z})$ has second moment matrix $\m{\Delta} = \E(\m{X} \m{X}^\top)$ with full rank $d$. Therefore, a combination of a Hoeffding-style argument and a corollary of Weyl's inequalities (\citet{HJ12}, Corollary 7.3.5) shows that the $d$ non-zero singular values $\sigma_i(\m{P})$ satisfy $\sigma_i(\m{P}) = \Omega(n)$ almost surely.  By showing that the spectral norm of $\m{A}-\m{P}$ has smaller asymptotic growth, we can once again invoke Weyl's argument to show that the top $d$ singular values $\sigma_i(\m{A})$ also satisfy $\sigma_i(\m{A}) = \Omega(n)$.

%Let $\xi \sim \F$, and define $\Delta = \mathbb{E}[\xi\xi^\top] \in \R^{d \times d}$.  Under the reasonable assumption that $\Delta$ is of rank $d$,

\begin{prop}
\label{spectral_bound}
$\|\m{A}-\m{P}\| = \mathrm{O}\left(n^{1/2}\log^{\alpha + {1/2}}(n)\right)$ almost surely.
\end{prop}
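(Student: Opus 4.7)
The plan is a standard truncation argument followed by a matrix Bernstein inequality. The entries $\m{A}_{ij}$ are not bounded, but Assumption~\ref{Ass:Reg}\textbf{iii.} provides polylogarithmic tails that allow truncation at a mild level while keeping the truncation bias negligible.

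First, I would fix $\rho > 3$ and set the truncation level $T_n = \beta_\rho \log^\alpha(n)$, decomposing each entry as $\m{A}_{ij} = \m{A}^<_{ij} + \m{A}^>_{ij}$ with $\m{A}^<_{ij} = \m{A}_{ij}\,\I(|\m{A}_{ij}| \leq T_n)$. By the exponential tail hypothesis and a union bound over the $O(n^2)$ entries, the probability that any entry exceeds $T_n$ (equivalently, that $\m{A}^> \neq 0$) is at most $n^{2-\rho}$, a summable sequence. Then I would write $\m{A}-\m{P} = (\m{A}^< - \E\m{A}^<) + (\m{A}^> - \E\m{A}^>)$ and handle the two pieces separately.

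For the bounded-entry piece, I would express $\m{A}^< - \E\m{A}^<$ as a sum of independent symmetric matrices $\m{X}_{ij} = (\m{A}^<_{ij} - \E\m{A}^<_{ij})(\m{e}_i\m{e}_j^\top + \m{e}_j\m{e}_i^\top)$ over $i \le j$, each with operator norm at most $2T_n$. The matrix variance parameter equals $\max_i \sum_j \var(\m{A}^<_{ij})$, which is $O(n)$ because Assumption~\ref{Ass:Reg}\textbf{iii.} implies uniformly bounded second moments. Matrix Bernstein then gives, for $t = C n^{1/2}\log^{\alpha+1/2}(n)$,
\begin{equation*}
\Prob\bigl(\|\m{A}^< - \E\m{A}^<\| > t\bigr) \leq 2n\exp\!\left(-\frac{t^2/2}{V + Lt/3}\right) \leq 2n\exp(-c\log n),
\end{equation*}
which is summable for $C$ large enough. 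For the residual piece, I would show that $\|\E \m{A}^>\| = o(1)$ by integrating the tail bound: $\E[|\m{A}_{ij}|\I(|\m{A}_{ij}|>T_n)]$ is super-polynomially small in $n$, so the entrywise $\ell_\infty$ bound multiplied by $n$ is still negligible. Since $\m{A}^> \equiv 0$ on the complement of the summable bad event from Step~1, the residual contribution is absorbed into the error bound.

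Finally, combining the pieces by the triangle inequality and invoking the Borel--Cantelli lemma against the summable probabilities gives the almost-sure bound. The main technical obstacle is calibrating the truncation threshold $T_n$: it must be large enough that $\Prob(\m{A}^> \neq 0)$ and the truncation bias $\|\E\m{A}^>\|$ are both summable, yet small enough that the $L = O(T_n)$ term in matrix Bernstein does not dominate. Choosing $T_n$ polylogarithmic in $n$ with exponent matching the tail parameter $\alpha$ is exactly what makes this balance work and produces the claimed rate $n^{1/2}\log^{\alpha+1/2}(n)$.
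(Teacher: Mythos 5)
Your proposal is correct and takes essentially the same route as the paper: both apply the matrix Bernstein inequality to the decomposition of $\m{A}-\m{P}$ into independent per-entry symmetric matrices, with a polylogarithmic operator-norm bound on each summand, a variance parameter of order $n$ times a polylog factor, and the same choice $t = Cn^{1/2}\log^{\alpha+1/2}(n)$. The only difference is that your explicit truncation and bias-control step handles the unbounded entries rigorously, whereas the paper invokes Assumption~\ref{Ass:Reg} \textbf{iii.} to treat each $|\m{A}_{ij}-\m{P}_{ij}|$ as bounded by $2\beta\log^\alpha(n)$ ``almost surely'' and leaves the union bound and recentering after truncation implicit.
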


\begin{proof}
We will make use of a matrix analogue of the Bernstein inequality (\cite{Tropp15}, Theorem 1.6.2):

\begin{thm}[Matrix Bernstein]
\label{bernstein}

Let $\m{M}_1, \ldots, \m{M}_n \in \R^{n \times n}$ be symmetric independent random matrices satisfying $\mathbb{E}(\m{M}_k) = 0$ and $\|\m{M}_k\| \leq L$ for each $1 \leq k \leq n$, for some fixed value $L$. 

Let $\m{M} = \displaystyle\sum_{k=1}^n \m{M}_k$ and let $v(\m{M}) = \|\mathbb{E}(\m{M}\m{M}^\top)\|$ denote the matrix variance statistic of $\m{M}$.  Then for all $t \geq 0$:

\begin{equation*}
	\Prob(\|\m{M}\| \geq t) \leq 2n\exp\left(\frac{-t^2/2}{v(\m{M}) + Lt/3}\right).\end{equation*}
\end{thm}

We apply this theorem as follows: for each $1 \leq i \leq j \leq n$, let $\m{M}_{ij}$ be the $n \times n$ matrix with $(i,j)^\mathrm{th}$ and $(j,i)^\mathrm{th}$ entries equal to $\m{A}_{ij} - \m{P}_{ij}$, and all other entries equal to $0$.  Then
\begin{equation*}
	\|\m{M}_{ij}\| = \left|\m{A}_{ij}-\m{P}_{ij}\right| < 2\beta\log^\alpha(n)
\end{equation*}
almost surely, and $\mathbb{E}(\m{M}_{ij}) = 0$, and so the matrix $\m{M} = \sum \m{M}_{ij} = \m{A}-\m{P}$ satisfies the criteria for Bernstein's theorem.

To bound the matrix variance statistic $v(\m{M})$, observe that
\begin{equation*}
	(\m{M}\m{M}^\top)_{ij} = \sum_{k = 1}^{n} (\m{A}_{ik} - \m{P}_{ik})(\m{A}_{jk} - \m{P}_{jk}),
\end{equation*}
and thus
\begin{equation*}
	\mathbb{E}\{(\m{M}\m{M}^\top)_{ij}\} = \left\{\begin{array}{cc}\displaystyle\sum_{k = 1}^{n} \mathrm{Var}(\m{A}_{ij})&i = j\\[4pt]0&i \neq j\end{array}\right.
\end{equation*} 

By Popoviciu's inequality, the variances $\mathrm{Var}(\m{A}_{ij})$ are bounded in absolute value by $\beta^2 \log^{2\alpha}(n)$, and so, since the matrix $\mathbb{E}(\m{M}\m{M}^\top)$ is diagonal, we see that
\begin{equation*}
	v(\m{M}) \leq \beta^2 n\log^{2\alpha}(n)
\end{equation*}
almost surely, and after substituting into Theorem~\ref{bernstein} we find that for any $t \geq 0$,
\begin{equation*}
	\Prob(\|\m{A} -\m{P}\| \geq t) \leq 2n \exp\left(\frac{-3t^2}{6\beta^2n	\log^{2\alpha}(n) + 4\beta\log^\alpha(n)t}\right)
\end{equation*}
almost surely.

The numerator of the exponential term dominates for $n$ sufficiently large if $t = cn^{1/2}\log^{\alpha+{1/2}}(n)$, and therefore $\|\m{A}-\m{P}\| = \mathrm{O}\left(n^{1/2}\log^{\alpha+{1/2}}(n)\right)$ almost surely, as required.
\end{proof}

The following result follows from an identical argument as that used in the proof of Lemma 17 in \cite{lyzinski}:

\begin{prop}
\label{orth_spectral_bound}
$\|\m{U}_\m{P}^\top(\m{A}-\m{P})\m{U}_\m{P}\|_F = \mathrm{O}\left(\log^{\alpha+{1/2}}(n)\right)$ almost surely.
\end{prop}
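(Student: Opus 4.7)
The plan is to mimic the strategy of \textbf{Proposition~\ref{spectral_bound}}, but applied entry-by-entry to the $d \times d$ matrix $\m{U}_\m{P}^\top(\m{A}-\m{P})\m{U}_\m{P}$. The crucial observation is that $\m{U}_\m{P}$ depends only on $\m{Z}_1,\ldots,\m{Z}_n$, so once we condition on this $\sigma$-algebra it becomes deterministic, and each scalar entry of the quadratic form reduces to a sum of independent, bounded, mean-zero random variables, to which a standard scalar concentration inequality applies.

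Concretely, writing $\m{u}_k$ for the $k$-th column of $\m{U}_\m{P}$, I would expand
\begin{equation*}
\|\m{U}_\m{P}^\top(\m{A}-\m{P})\m{U}_\m{P}\|_F^2 = \sum_{k,\ell=1}^d \left(\m{u}_k^\top (\m{A}-\m{P})\, \m{u}_\ell\right)^2,
\end{equation*}
and then use the symmetry of $\m{A}-\m{P}$ to rewrite each scalar as $\sum_{i \le j} c_{ij}^{(k,\ell)} (\m{A}_{ij}-\m{P}_{ij})$, with (conditionally) deterministic coefficients satisfying $\sum_{i \le j} \bigl(c_{ij}^{(k,\ell)}\bigr)^2 \le 2$ by orthonormality of the columns of $\m{U}_\m{P}$. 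Since the outer sum has only $d^2$ terms, it suffices to bound each such scalar quadratic form.

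For the concentration step, I would first invoke condition \emph{\textbf{iii.}} of \textbf{Assumption~\ref{Ass:Reg}} with $\rho$ sufficiently large (say $\rho > 2$), combined with a union bound over the $\mathrm{O}(n^2)$ pairs and Borel--Cantelli, to ensure that almost surely, for all sufficiently large $n$, every entry satisfies $|\m{A}_{ij}| \le C \log^\alpha n$. On this event, conditioning on $\m{Z}_1,\ldots,\m{Z}_n$, Hoeffding's inequality applied to $\sum_{i \le j} c_{ij}^{(k,\ell)} (\m{A}_{ij}-\m{P}_{ij})$ yields
\begin{equation*}
\Prob \left\{ \left|\m{u}_k^\top (\m{A}-\m{P}) \m{u}_\ell\right| > t \right\} \le 2 \exp\left(-\frac{c\, t^2}{\log^{2\alpha}(n)}\right)
\end{equation*}
for some constant $c > 0$, since the effective range of each summand is $\mathrm{O}(|c_{ij}^{(k,\ell)}| \log^\alpha n)$ and $\sum c_{ij}^2 \le 2$. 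Taking $t = c' \log^{\alpha + 1/2}(n)$ for $c'$ sufficiently large makes the right-hand side decay polynomially in $n$; Borel--Cantelli then gives $|\m{u}_k^\top (\m{A}-\m{P}) \m{u}_\ell| = \mathrm{O}(\log^{\alpha + 1/2}(n))$ almost surely, and summing over the $d^2$ entries delivers the claimed Frobenius bound.

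The main subtlety is the truncation argument: the exponential tail condition only controls individual entries, so care is required to choose $\rho$ large enough that the union bound over $\mathrm{O}(n^2)$ pairs remains Borel--Cantelli summable while the resulting truncation threshold is still of order $\log^\alpha n$. Beyond this, the proof is essentially a lower-dimensional analogue of \textbf{Proposition~\ref{spectral_bound}}: because we are projecting onto a $d$-dimensional subspace determined by the $\m{Z}_i$'s (and therefore independent of the noise $\m{A}-\m{P}$ given the latent positions), the additional $n^{1/2}$ factor appearing in the spectral-norm bound is replaced by a dimension-independent constant, and one is left only with the logarithmic factor from concentration.
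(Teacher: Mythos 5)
Your proposal is correct and takes essentially the same route as the paper: the paper's proof of this proposition is just a pointer to the identical argument of Lemma 17 in \cite{lyzinski}, namely Hoeffding-type concentration applied, conditionally on $\m{Z}_1,\ldots,\m{Z}_n$, to each of the $d^2$ bilinear forms $\m{u}_k^\top(\m{A}-\m{P})\m{u}_\ell$, whose coefficient squared-sums are bounded by orthonormality, followed by a union bound over entries and Borel--Cantelli. Your truncation step via condition \emph{\textbf{iii.}} of \textbf{Assumption~\ref{Ass:Reg}} is the natural adaptation of that bounded-entry argument to exponential-tailed weights, and is at the same level of rigour the paper itself employs in \textbf{Proposition~\ref{spectral_bound}}.
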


\begin{prop}
\label{alignment_bounds}
The following bounds hold almost surely:
\begin{enumerate}[label=\emph{\textbf{\roman*.}}, labelindent=0pt, labelwidth=!]
\item $\|\m{U}_\m{A}\m{U}_\m{A}^\top - \m{U}_\m{P}\m{U}_\m{P}^\top\| = \mathrm{O}\left(n^{-{1/2}}\log^{\alpha+{1/2}}(n)\right)$;
\item $\|\m{U}_\m{A}-\m{U}_\m{P}\m{U}_\m{P}^\top\m{U}_\m{A}\|_F = \mathrm{O}\left(n^{-{1/2}}\log^{\alpha+{1/2}}(n)\right)$;
\item $\|\m{U}_\m{P}^\top\m{U}_\m{A}\m{\Lambda}_\m{A}-\m{\Lambda}_\m{P}\m{U}_\m{P}^\top\m{U}_\m{A}\|_F = \mathrm{O}\left(\log^{2\alpha+1}(n)\right)$; 
\item $\|\m{U}_\m{P}^\top\m{U}_\m{A}\Ipq - \Ipq\m{U}_\m{P}^\top\m{U}_\m{A}\|_F = \mathrm{O}\left(n^{-1}\log^{2\alpha+1}(n)\right)$
\end{enumerate}
\end{prop}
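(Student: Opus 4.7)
The plan is to prove the four bounds in sequence, each a perturbation-theoretic consequence of the preceding one. For part \textbf{i.}, I would apply a Davis--Kahan $\sin\Theta$ argument. The non-zero eigenvalues of $\m{P}$ are $\Omega(n)$ almost surely (by the Hoeffding-style argument sketched at the start of this section, using the minimal dimensionality condition of \textbf{Assumption~\ref{Ass:Reg}}) while the remaining eigenvalues of $\m{P}$ vanish, so the eigengap separating the top-$d$ spectrum is $\Omega(n)$. Combining this with $\|\m{A}-\m{P}\| = \mathrm{O}(n^{1/2}\log^{\alpha+1/2}(n))$ from \textbf{Proposition~\ref{spectral_bound}} yields the spectral-norm rate $\mathrm{O}(n^{-1/2}\log^{\alpha+1/2}(n))$ on $\|\m{U}_\m{A}\m{U}_\m{A}^\top - \m{U}_\m{P}\m{U}_\m{P}^\top\|$. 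Part \textbf{ii.} is then immediate: $\m{U}_\m{A} - \m{U}_\m{P}\m{U}_\m{P}^\top\m{U}_\m{A} = (\m{I}-\m{U}_\m{P}\m{U}_\m{P}^\top)\m{U}_\m{A}$ has rank at most $d$ and its spectral norm equals $\|\m{U}_\m{A}\m{U}_\m{A}^\top - \m{U}_\m{P}\m{U}_\m{P}^\top\|$ by a standard $\sin\Theta$ identity, so its Frobenius norm is at most $\sqrt{d}$ times the bound from part \textbf{i.}, with $d$ fixed.

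For part \textbf{iii.}, I would start from $\m{A}\m{U}_\m{A} = \m{U}_\m{A}\m{\Lambda}_\m{A}$ and $\m{P}\m{U}_\m{P} = \m{U}_\m{P}\m{\Lambda}_\m{P}$, which together give
\begin{equation*}
\m{U}_\m{P}^\top\m{U}_\m{A}\m{\Lambda}_\m{A} - \m{\Lambda}_\m{P}\m{U}_\m{P}^\top\m{U}_\m{A} = \m{U}_\m{P}^\top(\m{A}-\m{P})\m{U}_\m{A}.
\end{equation*}
Splitting $\m{U}_\m{A} = \m{U}_\m{P}\m{U}_\m{P}^\top\m{U}_\m{A} + (\m{I}-\m{U}_\m{P}\m{U}_\m{P}^\top)\m{U}_\m{A}$ on the right produces two contributions. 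The first, $\m{U}_\m{P}^\top(\m{A}-\m{P})\m{U}_\m{P}\,\m{U}_\m{P}^\top\m{U}_\m{A}$, is $\mathrm{O}(\log^{\alpha+1/2}(n))$ in Frobenius norm by \textbf{Proposition~\ref{orth_spectral_bound}} combined with $\|\m{U}_\m{P}^\top\m{U}_\m{A}\| \le 1$. The second is bounded by $\|\m{A}-\m{P}\| \cdot \|(\m{I}-\m{U}_\m{P}\m{U}_\m{P}^\top)\m{U}_\m{A}\|_F$, which is $\mathrm{O}(\log^{2\alpha+1}(n))$ by \textbf{Proposition~\ref{spectral_bound}} and part \textbf{ii.}; this second, dominant term gives the stated rate.

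Part \textbf{iv.} exploits the block structure induced by the signature. Since $\m{P}$ has $p$ positive and $q$ negative non-zero eigenvalues of magnitude $\Omega(n)$ while $\|\m{A}-\m{P}\| = \mathrm{o}(n)$, for $n$ large enough $\m{\Lambda}_\m{A}$ inherits the same sign pattern and, after suitable reordering, the positive entries of both $\m{\Lambda}_\m{P}$ and $\m{\Lambda}_\m{A}$ occupy the first $p$ diagonal slots, the negative entries the last $q$. Writing $\m{M} = \m{U}_\m{P}^\top\m{U}_\m{A}$ in the corresponding $p|q$ block form, $\m{M}\Ipq - \Ipq\m{M}$ vanishes on the two diagonal blocks and equals $\pm 2\m{M}_{ij}$ on the off-diagonal blocks, so its Frobenius norm is controlled entirely by the off-diagonal entries of $\m{M}$. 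For any such entry, the identity from part \textbf{iii.} reads $((\m{\Lambda}_\m{A})_{jj} - (\m{\Lambda}_\m{P})_{ii})\m{M}_{ij} = \m{E}_{ij}$; on the off-diagonal blocks the two diagonal entries have opposite signs and magnitudes $\Omega(n)$, so their difference has absolute value $\Omega(n)$. Dividing through and combining with the Frobenius bound on $\m{E}$ from part \textbf{iii.} delivers $\|\m{M}\Ipq - \Ipq\m{M}\|_F = \mathrm{O}(n^{-1}\log^{2\alpha+1}(n))$. I expect the main obstacle to be precisely this final step: one must carefully justify the ordering so that the denominator $(\m{\Lambda}_\m{A})_{jj} - (\m{\Lambda}_\m{P})_{ii}$ is guaranteed to be of order $n$ on off-diagonal blocks rather than an indeterminate difference of two eigenvalues of the same sign. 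Once that signature-matching step is in hand, the rest reduces to mechanical application of the earlier propositions.
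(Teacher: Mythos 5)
Your proposal is correct and takes essentially the same route as the paper's proof: Davis--Kahan combined with \textbf{Proposition~\ref{spectral_bound}} for part \emph{\textbf{i.}}, the projection identity $\m{U}_\m{A}-\m{U}_\m{P}\m{U}_\m{P}^\top\m{U}_\m{A} = (\m{U}_\m{A}\m{U}_\m{A}^\top-\m{U}_\m{P}\m{U}_\m{P}^\top)\m{U}_\m{A}$ for part \emph{\textbf{ii.}}, the same splitting of $\m{U}_\m{P}^\top(\m{A}-\m{P})\m{U}_\m{A}$ using \textbf{Proposition~\ref{orth_spectral_bound}} for part \emph{\textbf{iii.}}, and an entry-wise eigenvalue-gap argument for part \emph{\textbf{iv.}}. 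In particular, your off-diagonal-block denominator $\left|(\m{\Lambda}_\m{A})_{jj}-(\m{\Lambda}_\m{P})_{ii}\right| = \sigma_j(\m{A})+\sigma_i(\m{P}) = \Omega(n)$, justified by sign-matching via Weyl, is exactly the quantity the paper exploits through its factor $\left(1 + \sigma_i(\m{P})/\sigma_j(\m{A})\right) \geq 1$ after rescaling by $\m{\Sigma}_\m{A}^{-1}$.
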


\begin{proof}
\
\begin{enumerate}[label=\emph{\textbf{\roman*.}}, labelindent=0pt, labelwidth=!]
\item Let $\sigma_1, \ldots, \sigma_d$ denote the singular values of $\m{U}_\m{P}^\top\m{U}_\m{A}$, and let $\theta_i = \cos^{-1}(\sigma_i)$ be the principal angles.  It is a standard result that the non-zero eigenvalues of the matrix $\m{U}_\m{A}\m{U}_\m{A}^\top - \m{U}_\m{P}\m{U}_\m{P}^\top$ are precisely the $\sin(\theta_i)$ (each occurring twice) and so, by a variant of Davis-Kahan (\cite{davis_kahan}, Theorem 4) we have
\begin{equation*}
\|\m{U}_\m{A}\m{U}_\m{A}^\top - \m{U}_\m{P}\m{U}_\m{P}^\top\| = \m{A}x_{i \in \{1,\ldots,d\}}|\sin(\theta_i)| \leq \frac{2\sqrt{d}\left(2\sigma_1(\m{P})+\|\m{A}-\m{P}\|\right)\|\m{A}-\m{P}\|}{\sigma_{d}(\m{P})^2}
\end{equation*}
for $n$ sufficiently large.

The spectral norm bound from Proposition~\ref{spectral_bound} then shows that
\begin{align*}
\|\m{U}_\m{A}\m{U}_\m{A}^\top - \m{U}_\m{P}\m{U}_\m{P}^\top\| &= \mathrm{O}\left(\frac{\left\{\sigma_1(\m{P})+n^{1/2}\log^{\alpha+{1/2}}(n)\right\} n^{1/2}\log^{\alpha+{1/2}}(n)}{\sigma_{d}(\m{P})^2}\right)
\\&= \mathrm{O}\left(n^{-{1/2}}\log^{\alpha+{1/2}}(n)\right)
\end{align*}
since $\sigma_i(\m{P}) = \Omega(n)$ almost surely.

\item Using the bound from part \emph{\textbf{i.}}, we find that
\begin{align*}
\|\m{U}_\m{A}-\m{U}_\m{P}\m{U}_\m{P}^\top\m{U}_\m{A}\|_F &= \|(\m{U}_\m{A}\m{U}_\m{A}^\top-\m{U}_\m{P}\m{U}_\m{P}^\top)\m{U}_\m{A}\|_F \leq \|\m{U}_\m{A}\m{U}_\m{A}^\top-\m{U}_\m{P}\m{U}_\m{P}^\top\|\|\m{U}_\m{A}\|_F
\\&= \mathrm{O}\left(n^{-{1/2}}\log^{\alpha+{1/2}}(n)\right).
\end{align*}

\item Observe that
\begin{align*}
\m{U}_\m{P}^\top\m{U}_\m{A}\m{\Lambda}_\m{A}-\m{\Lambda}_\m{P}\m{U}_\m{P}^\top\m{U}_\m{A}\Ipq &= \m{U}_\m{P}^\top(\m{A}-\m{P})\m{U}_\m{A}
\\&= \m{U}_\m{P}^\top(\m{A}-\m{P})(\m{U}_\m{A} - \m{U}_\m{P}\m{U}_\m{P}^\top\m{U}_\m{A}) + \m{U}_\m{P}^\top(\m{A}-\m{P})\m{U}_\m{P}\m{U}_\m{P}^\top\m{U}_\m{A},
\end{align*} 
and so
\begin{eqnarray*}
\lefteqn{ \|\m{U}_\m{P}^\top\m{U}_\m{A}\m{\Lambda}_\m{A}-\m{\Lambda}_\m{P}\m{U}_\m{P}^\top\m{U}_\m{A}\|_F } \\
&\leq& \|\m{U}_\m{P}^\top\|\|\m{A}-\m{P}\|\|\m{U}_\m{A} - \m{U}_\m{P}\m{U}_\m{P}^\top\m{U}_\m{A}\|_F + \|\m{U}_\m{P}^\top(\m{A}-\m{P})\m{U}_\m{P}\|_F\|\m{U}_\m{P}^\top\m{U}_\m{A}\|_F
\\&=& \mathrm{O}\left(n^{1/2}\log^{\alpha+\frac{1}{2}}(n) \cdot n^{-{1/2}}\log^{\alpha+{1/2}}(n)\right) + \mathrm{O}\left(\log^{\alpha+{1/2}}(n)\right)
\\&=& \mathrm{O}\left(\log^{2\alpha+1}(n)\right),
\end{eqnarray*}
where we have used Propositions~\ref{spectral_bound}, \ref{orth_spectral_bound} and the result from part \emph{\textbf{ii.}}.

\item Note that
\begin{align*}
\m{U}_\m{P}^\top\m{U}_\m{A}\Ipq - \Ipq\m{U}_\m{P}^\top\m{U}_\m{A} =& \left\{(\m{U}_\m{P}^\top\m{U}_\m{A}\m{\Lambda}_\m{A}-\m{\Lambda}_\m{P}\m{U}_\m{P}^\top\m{U}_\m{A}) + (\m{\Sigma}_\m{P}\m{U}_\m{P}^\top\m{U}_\m{A} - \Ipq\m{U}_\m{P}^\top\m{U}_\m{A}\m{\Lambda}_\m{A})\right\}\m{\Sigma}_\m{A}^{-1} \\&- \m{\Sigma}_\m{P}(\m{U}_\m{P}^\top\m{U}_\m{A}\Ipq - \Ipq\m{U}_\m{P}^\top\m{U}_\m{A})\m{\Sigma}_\m{A}^{-1}.
\end{align*}
where $\m{\Sigma}_\m{A} = \m{\Lambda}_\m{A} \Ipq$ and $\m{\Sigma}_\m{P} = \m{\Lambda}_\m{P} \Ipq$.

For any $i, j \in \{1,\ldots,d\}$, by rearranging and bounding the absolute value of the right-hand terms by the Frobenius norm, we find
\begin{eqnarray*}
\lefteqn{ \left|(\m{U}_\m{P}^\top\m{U}_\m{A}\Ipq - \Ipq\m{U}_\m{P}^\top\m{U}_\m{A})_{ij}\right|\left(1 + \tfrac{\sigma_i(\m{P})}{\sigma_j(\m{A})}\right)}
\\&\leq& \left(\|\m{U}_\m{P}^\top\m{U}_\m{A}\m{\Lambda}_\m{A}-\m{\Lambda}_\m{P}\m{U}_\m{P}^\top\m{U}_\m{A}\|_F + \|\m{\Sigma}_\m{P}\m{U}_\m{P}^\top\m{U}_\m{A} - \Ipq\m{U}_\m{P}^\top\m{U}_\m{A}\m{\Lambda}_\m{A}\|_F\right)\|\m{\Sigma}_\m{A}^{-1}\|_F
\\&=& \left(\|\m{U}_\m{P}^\top\m{U}_\m{A}\m{\Lambda}_\m{A}-\m{\Lambda}_\m{P}\m{U}_\m{P}^\top\m{U}_\m{A}\|_F + \|\m{\Lambda}_\m{P}\m{U}_\m{P}^\top\m{U}_\m{A} - \m{U}_\m{P}^\top\m{U}_\m{A}\m{\Lambda}_\m{A}\|_F\right)\|\m{\Sigma}_\m{A}^{-1}\|_F
\\&=& \mathrm{O}\left(n^{-1}\log^{2\alpha+1}(n)\right),
\end{eqnarray*}
where we have used part \emph{\textbf{iii.}}  The result follows from the fact that $\left(1 + \tfrac{\sigma_i(\m{P})}{\sigma_j(\m{A})}\right) \geq 1$.
\end{enumerate}
\end{proof}

\begin{prop}
\label{procrustes_bound}
Let $\m{U}_\m{P}^\top\m{U}_\m{A} + \Ipq\m{U}_\m{P}^\top\m{U}_\m{A}\Ipq$ admit the singular value decomposition
\begin{align*}
\m{U}_\m{P}^\top\m{U}_\m{A} + \Ipq\m{U}_\m{P}^\top\m{U}_\m{A}\Ipq = \m{W}_1\m{\Sigma}\m{W}_2^\top,
\end{align*}
and let $\m{W} = \m{W}_1\m{W}_2^\top$.  Then $\m{W} \in \mathbb{O}(d) \cap \mathbb{O}(p,q)$ and
\begin{align*}
\|\m{U}_\m{P}^\top\m{U}_\m{A} - \m{W}\|_F, ~\|\Ipq\m{U}_\m{P}^\top\m{U}_\m{A}\Ipq - \m{W}\|_F = \mathrm{O}\left(n^{-1}\log^{2\alpha+1}(n)\right)
\end{align*}
almost surely.
\end{prop}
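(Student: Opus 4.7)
The plan is to exploit the block structure that $\m{M} := \m{U}_\m{P}^\top\m{U}_\m{A} + \Ipq\m{U}_\m{P}^\top\m{U}_\m{A}\Ipq$ inherits from $\Ipq$, and then to bound $\|\m{U}_\m{P}^\top\m{U}_\m{A} - \m{W}\|_F$ by a triangle inequality routed through $\m{M}/2$.

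For the first claim, I would partition $\m{U}_\m{P}^\top\m{U}_\m{A}$ into $(p,q)$-blocks. Conjugation by $\Ipq$ flips the signs of the off-diagonal blocks, so $\m{M}$ is block-diagonal with diagonal blocks of sizes $p$ and $q$. For $n$ large, the singular values of $\m{M}$ will be shown to cluster near $2$, so $\m{M}$ is invertible and its orthogonal polar factor is unique. Taking polar factors of the two diagonal blocks separately and assembling them block-diagonally produces an orthogonal matrix that, together with the corresponding positive factor, recovers $\m{M}$; by uniqueness this matrix must equal $\m{W}$. Membership of $\m{W}$ in $\mathbb{O}(p,q)$ is then immediate, because any block-diagonal orthogonal matrix with block sizes $p$ and $q$ commutes with $\Ipq$.

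For the Frobenius bound, I would write
\[
\|\m{U}_\m{P}^\top\m{U}_\m{A} - \m{W}\|_F \le \|\m{U}_\m{P}^\top\m{U}_\m{A} - \tfrac{1}{2}\m{M}\|_F + \|\tfrac{1}{2}\m{M} - \m{W}\|_F.
\]
The first piece equals $\tfrac{1}{2}\|\m{U}_\m{P}^\top\m{U}_\m{A} - \Ipq\m{U}_\m{P}^\top\m{U}_\m{A}\Ipq\|_F$, and multiplying inside the norm on the right by $\Ipq$ (which preserves the Frobenius norm) reduces this to half the bound from \textbf{Proposition~\ref{alignment_bounds} iv.} For the second piece, unitary invariance yields $\|\tfrac{1}{2}\m{M} - \m{W}\|_F = \|\tfrac{1}{2}\m{\Sigma} - \m{I}\|_F$, so it suffices to control each $\sigma_i(\m{M})$ near $2$. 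Weyl's inequality gives $|\sigma_i(\m{M}) - 2\sigma_i(\m{U}_\m{P}^\top\m{U}_\m{A})| \le \|\m{M} - 2\m{U}_\m{P}^\top\m{U}_\m{A}\|$, which is already of the correct order. Writing $\sigma_i(\m{U}_\m{P}^\top\m{U}_\m{A}) = \cos\theta_i$ for the principal angles $\theta_i$, the elementary bound $1 - \cos\theta_i \le \sin^2\theta_i$ combined with the Davis--Kahan control on $\sin\theta_i$ extracted from the proof of \textbf{Proposition~\ref{alignment_bounds} i.} gives $|\sigma_i(\m{U}_\m{P}^\top\m{U}_\m{A}) - 1| = \mathrm{O}(n^{-1}\log^{2\alpha+1}(n))$; since $d$ is fixed, the sum over $i$ preserves the rate.

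The corresponding bound on $\|\Ipq\m{U}_\m{P}^\top\m{U}_\m{A}\Ipq - \m{W}\|_F$ then follows from one further triangle inequality together with another application of \textbf{Proposition~\ref{alignment_bounds} iv.} The most delicate point I expect is in part (a): arguing that $\m{W}$ really does inherit the block-diagonal structure of $\m{M}$ when the two diagonal blocks happen to share singular values. This is why the argument routes through the uniqueness of the polar factor of an invertible matrix rather than of a (possibly non-unique) SVD, and correspondingly why the claim is naturally stated in the almost-sure sense for $n$ large enough.
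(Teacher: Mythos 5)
Your proof is correct, but it takes a genuinely different route from the paper's. The paper never analyses the singular values of $\m{M} = \m{U}_\m{P}^\top\m{U}_\m{A} + \Ipq\m{U}_\m{P}^\top\m{U}_\m{A}\Ipq$ directly; instead it argues variationally: $\m{W}$ is characterised as the minimiser over $\mathbb{O}(d)$ of $\|\m{U}_\m{P}^\top\m{U}_\m{A} - \m{Q}\|_F^2 + \|\Ipq\m{U}_\m{P}^\top\m{U}_\m{A}\Ipq - \m{Q}\|_F^2$ (the modified one-mode Procrustes problem), so it suffices to exhibit one feasible competitor with small objective value. The paper takes as competitor $\m{W}_\m{U}$, the orthogonal polar factor of $\m{U}_\m{P}^\top\m{U}_\m{A}$, bounds $\|\m{U}_\m{P}^\top\m{U}_\m{A} - \m{W}_\m{U}\|_F$ using exactly the trick you use ($1-\sigma_i \leq 1-\sigma_i^2 = \sin^2\theta_i$ plus Davis--Kahan), bounds the second term of the objective via \textbf{Proposition~\ref{alignment_bounds}} \emph{\textbf{iv.}}, and then transfers the bound to $\m{W}$ by minimality. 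You instead identify $\m{W}$ as the polar factor of $\m{M}$ itself and route the triangle inequality through $\m{M}/2$, controlling $\|\tfrac{1}{2}\m{M} - \m{W}\|_F$ by Weyl perturbation of the singular values of $\m{M}$ about $2$; both arguments consume the same inputs (\textbf{Proposition~\ref{alignment_bounds}} \emph{\textbf{i.}} and \emph{\textbf{iv.}}) and deliver the same rate. What the paper's variational route buys is that the norm bound needs neither invertibility of $\m{M}$ nor any uniqueness of polar factors. What your route buys is self-containedness (no appeal to the ``standard argument'' for the two-term Procrustes minimiser) and, on the membership claim $\m{W} \in \mathbb{O}(d) \cap \Opq$, slightly more care than the paper: the paper asserts that the SVD factors $\m{W}_1, \m{W}_2$ lie in $\mathbb{O}(p) \oplus \mathbb{O}(q)$, which for a block-diagonal $\m{M}$ is only guaranteed for a suitably aligned choice of SVD when the two blocks share singular values, whereas your appeal to uniqueness of the polar factor of an invertible matrix pins down the product $\m{W}_1\m{W}_2^\top$ unambiguously, at the acceptable cost of working on the almost-sure event that $\m{M}$ is invertible for $n$ large.
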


\begin{proof}
A standard argument shows that a solution to the modified \emph{one mode} orthogonal Procrustes problem
\begin{align*}
\widehat{\m{W}} = \argmin_{\m{Q} \in \mathbb{O}(d)} \|\m{P}_1^\top\m{A}_1 - \m{Q}\|_F^2 + \|\m{P}_2^\top\m{A}_2 - \m{Q}\|_F^2
\end{align*}
for matrices $\m{A}_i, \m{P}_i \in \R^{n \times d}$ is given by $\widehat{\m{W}} = \widehat{\m{W}}_1 \widehat{\m{W}}_2^\top$, where we have the singular value decomposition
\begin{align*}
\tfrac{1}{2}(\m{P}_1^\top\m{A}_1 + \m{P}_2\m{A}_2^\top) = \widehat{\m{W}}_1 \m{\Sigma} \widehat{\m{W}}_2^\top.
\end{align*}

Setting $\m{W}$ as in the statement of the proposition, we therefore observe that $\m{W}$ satisfies
\begin{align*}
\m{W} = \argmin_{\m{Q} \in \mathbb{O}(d)}\|\m{U}_\m{P}^\top\m{U}_\m{A} - \m{Q}\|_F^2 + \|\Ipq\m{U}_\m{P}^\top\m{U}_\m{A}\Ipq - \m{Q}\|_F^2.
\end{align*}

Let $\m{U}_\m{P}^\top\m{U}_\m{A} = \m{W}_{\m{U},1}\m{\Sigma}_\m{U}\m{W}_{\m{U},2}^\top$ be the singular value decomposition of $\m{U}_\m{P}^\top\m{U}_\m{A}$, and define $\m{W}_\m{U} \in \mathbb{O}(d)$ by $\m{W}_\m{U} = \m{W}_{\m{U},1}\m{W}_{\m{U},2}^\top$.  Then
\begin{align*}
\|\m{U}_\m{P}^\top \m{U}_\m{A} - \m{W}_\m{U}\|_F &= \|\m{\Sigma} - \m{I}\|_F = \left(\sum_{i=1}^d (1-\sigma_i)^2\right)^{1/2} \leq \sum_{i=1}^d (1-\sigma_i) \leq \sum_{i=1}^d(1-\sigma_i^2) 
\\&= \sum_{i=1}^d \sin^2(\theta_i) \leq d\|\m{U}_\m{A}\m{U}_\m{A}^\top - \m{U}_\m{P}\m{U}_\m{P}^\top\|^2 \\&= \mathrm{O}\left(n^{-1}\log^{2\alpha+1}(n)\right).
\end{align*}

Also,
\begin{align*}
\|\Ipq\m{U}_\m{P}^\top \m{U}_\m{A}\Ipq - \m{W}_\m{U}\|_F &\leq \|\Ipq\m{U}_\m{P}^\top \m{U}_\m{A}\Ipq - \m{U}_\m{P}^\top\m{U}_\m{A}\|_F + \|\m{U}_\m{P}^\top\m{U}_\m{A} - \m{W}_\m{U}\|_F \\& \leq \|\m{U}_\m{P}^\top \m{U}_\m{A}\Ipq - \Ipq\m{U}_\m{P}^\top\m{U}_\m{A}\|_F + \|\m{U}_\m{P}^\top\m{U}_\m{A} - \m{W}_\m{U}\|_F \\&= \mathrm{O}\left(n^{-1}\log^{2\alpha+1}(n)\right)
\end{align*}
by Proposition~\ref{alignment_bounds}.

Combining these shows that
\begin{align*}
\|\m{U}_\m{P}^\top\m{U}_\m{A} - \m{W}\|_F^2 + \|\Ipq\m{U}_\m{P}^\top\m{U}_\m{A}\Ipq - \m{W}\|_F^2 &\leq \|\m{U}_\m{P}^\top\m{U}_\m{A} - \m{W}_\m{U}\|_F^2 + \|\Ipq\m{U}_\m{P}^\top\m{U}_\m{A}\Ipq - \m{W}_\m{U}\|_F^2
\\&= \mathrm{O}\left(n^{-2}\log^{4\alpha+2}(n)\right),
\end{align*}
which gives the desired bound. 

Finally, we observe that the matrix $\m{U}_\m{P}^\top\m{U}_\m{A} + \Ipq\m{U}_\m{P}^\top\m{U}_\m{A}\Ipq \in \R^{p \times p} \oplus \R^{q \times q}$, and thus the matrices $\m{W}_1, \m{W}_2 \in \mathbb{O}(p) \oplus \mathbb{O}(q)$, so in particular $\m{W} \in \mathbb{O}(d) \cap \mathbb{O}(p,q)$.
\end{proof}

\begin{prop}
\label{W_bounds}
The following bounds hold almost surely:
\begin{enumerate}[label=\emph{\textbf{\roman*.}}, labelindent=0pt, labelwidth=!]
\item $\|\m{W}\m{\Sigma}_\m{A} - \m{\Sigma}_\m{P}\m{W}\|_F = \mathrm{O}\left(\log^{2\alpha+1}(n)\right)$;
\item $\|\m{W}\m{\Sigma}_\m{A}^{1/2} - \m{\Sigma}_\m{P}^{1/2}\m{W}\|_F = \mathrm{O}\left(n^{-{1/2}}\log^{2\alpha+1}(n)\right)$;
\item $\|\m{W}\m{\Sigma}_\m{A}^{-{1/2}} - \m{\Sigma}_\m{P}^{-{1/2}}\m{W}\|_F = \mathrm{O}\left(n^{-{3/2}}\log^{2\alpha+1}(n)\right)$.
\end{enumerate}
\end{prop}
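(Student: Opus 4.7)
The plan is to prove the three bounds as a cascade: part \emph{\textbf{i.}} directly from the already-established alignment bounds, and parts \emph{\textbf{ii.}} and \emph{\textbf{iii.}} from part \emph{\textbf{i.}} by exploiting the fact that $\m{\Sigma}_\m{A} = \m{\Lambda}_\m{A}\Ipq = |\m{\Lambda}_\m{A}|$ and $\m{\Sigma}_\m{P} = |\m{\Lambda}_\m{P}|$ are positive diagonal matrices.

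For part \emph{\textbf{i.}}, I would first replace $\m{W}$ by $\m{U}_\m{P}^\top\m{U}_\m{A}$, paying a remainder of the form $\m{\Sigma}_\m{P}(\m{U}_\m{P}^\top\m{U}_\m{A} - \m{W}) - (\m{U}_\m{P}^\top\m{U}_\m{A} - \m{W})\m{\Sigma}_\m{A}$ whose Frobenius norm is $\mathrm{O}(n)\cdot\mathrm{O}(n^{-1}\log^{2\alpha+1}(n)) = \mathrm{O}(\log^{2\alpha+1}(n))$ by \textbf{Proposition~\ref{procrustes_bound}}. The leading term $\m{U}_\m{P}^\top\m{U}_\m{A}\m{\Lambda}_\m{A}\Ipq - \m{\Lambda}_\m{P}\Ipq\m{U}_\m{P}^\top\m{U}_\m{A}$ I would split as
\[
(\m{U}_\m{P}^\top\m{U}_\m{A}\m{\Lambda}_\m{A} - \m{\Lambda}_\m{P}\m{U}_\m{P}^\top\m{U}_\m{A})\Ipq + \m{\Lambda}_\m{P}(\m{U}_\m{P}^\top\m{U}_\m{A}\Ipq - \Ipq\m{U}_\m{P}^\top\m{U}_\m{A}),
\]
the first summand bounded by \textbf{Proposition~\ref{alignment_bounds}} part \emph{\textbf{iii.}} and the second by \textbf{Proposition~\ref{alignment_bounds}} part \emph{\textbf{iv.}}. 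The one delicate moment is that in the second summand the factor $\|\m{\Lambda}_\m{P}\| = \mathrm{O}(n)$ is exactly absorbed by the $n^{-1}$ appearing in part \emph{\textbf{iv.}}---which is why those sharp rates were needed in the first place.

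For part \emph{\textbf{ii.}}, setting $X := \m{W}\m{\Sigma}_\m{A}^{1/2} - \m{\Sigma}_\m{P}^{1/2}\m{W}$, I would use the Sylvester-style identity
\[
\m{\Sigma}_\m{P}^{1/2}X + X\m{\Sigma}_\m{A}^{1/2} = \m{W}\m{\Sigma}_\m{A} - \m{\Sigma}_\m{P}\m{W}.
\]
Because both $\m{\Sigma}_\m{A}^{1/2}$ and $\m{\Sigma}_\m{P}^{1/2}$ are diagonal with entries $\Omega(n^{1/2})$ almost surely, this can be solved entrywise as $X_{ij} = (\m{W}\m{\Sigma}_\m{A} - \m{\Sigma}_\m{P}\m{W})_{ij}/\{(\m{\Sigma}_\m{P}^{1/2})_{ii} + (\m{\Sigma}_\m{A}^{1/2})_{jj}\}$, and the denominators gain a uniform factor of $n^{1/2}$ over the bound from part \emph{\textbf{i.}}

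Part \emph{\textbf{iii.}} then follows in one line from the identity
\[
\m{W}\m{\Sigma}_\m{A}^{-1/2} - \m{\Sigma}_\m{P}^{-1/2}\m{W} = -\m{\Sigma}_\m{P}^{-1/2}(\m{W}\m{\Sigma}_\m{A}^{1/2} - \m{\Sigma}_\m{P}^{1/2}\m{W})\m{\Sigma}_\m{A}^{-1/2},
\]
bounded by submultiplicativity using $\|\m{\Sigma}_\m{A}^{-1/2}\|, \|\m{\Sigma}_\m{P}^{-1/2}\| = \mathrm{O}(n^{-1/2})$ together with part \emph{\textbf{ii.}} I expect the $\mathrm{O}(n)\cdot\mathrm{O}(n^{-1})$ cancellation in part \emph{\textbf{i.}} to be the only step requiring real care; everything else is routine manipulation of positive diagonal matrices.
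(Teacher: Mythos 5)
Your proof is correct and follows essentially the same route as the paper: part \emph{\textbf{i.}} is the same cascade reducing to \textbf{Proposition~\ref{procrustes_bound}} and \textbf{Proposition~\ref{alignment_bounds}} (you invoke part \emph{\textbf{iv.}} of the latter directly where the paper instead uses its bound on $\|\Ipq\m{U}_\m{P}^\top\m{U}_\m{A}\Ipq - \m{W}\|_F$, which rests on the same fact), with the $\mathrm{O}(n)\cdot\mathrm{O}(n^{-1})$ cancellation you flag being exactly the step the paper relies on as well. Your Sylvester-identity packaging of parts \emph{\textbf{ii.}} and \emph{\textbf{iii.}} is precisely the paper's entrywise conjugate-and-divide argument ($\m{W}_{ij}(\sigma_j(\m{A})^{1/2}-\sigma_i(\m{P})^{1/2}) = \m{W}_{ij}(\sigma_j(\m{A})-\sigma_i(\m{P}))/(\sigma_j(\m{A})^{1/2}+\sigma_i(\m{P})^{1/2})$, etc.) written in matrix form, so there is no gap.
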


\begin{proof}
\begin{enumerate}[wide, label=\emph{\textbf{\roman*.}}, labelindent=0pt, labelwidth=!]
\item Observe that
\begin{align*}
\m{W}\m{\Sigma}_\m{A} - \m{\Sigma}_\m{P}\m{W} &= (\m{W}-\m{U}_\m{P}^\top\m{U}_\m{A})\m{\Sigma}_\m{A} + \m{U}_\m{P}^\top\m{U}_\m{A}\m{\Sigma}_\m{A} - \m{\Sigma}_\m{P}\m{W}
\\&= (\m{W}-\m{U}_\m{P}^\top\m{U}_\m{A})\m{\Sigma}_\m{A} + (\m{U}_\m{P}^\top\m{U}_\m{A}\m{\Sigma}_\m{A} - \m{\Sigma}_\m{P}\Ipq\m{U}_\m{P}^\top\m{U}_\m{A}\Ipq) + \m{\Sigma}_\m{P}(\Ipq\m{U}_\m{P}^\top\m{U}_\m{A}\Ipq - \m{W}).
\end{align*}

Proposition~\ref{procrustes_bound} shows that the terms $\|(\m{W}-\m{U}_\m{P}^\top\m{U}_\m{A})\m{\Sigma}_\m{A}\|_F$ and $\|\m{\Sigma}_\m{P}(\Ipq\m{U}_\m{P}^\top\m{U}_\m{A}\Ipq-\m{W})\|_F$ are both $\mathrm{O}\left(\log^{2\alpha+1}(n)\right)$, while $\|\m{U}_\m{P}^\top\m{U}_\m{A}\m{\Sigma}_\m{A} - \m{\Sigma}_\m{P}\Ipq\m{U}_\m{P}^\top\m{U}_\m{A}\Ipq\|_F$ is $\mathrm{O}\left(\log^{2\alpha+1}(n)\right)$, and so $\|\m{W}\m{\Sigma}_\m{A} - \m{\Sigma}_\m{P}\m{W}\|_F = \mathrm{O}\left(\log^{2\alpha+1}(n)\right)$.

\item We will bound the absolute value of the terms $\left(\m{W}\m{\Sigma}_\m{A}^{1/2}-\m{\Sigma}_\m{P}^{1/2} \m{W}\right)_{ij}$. Note that 
\begingroup
\addtolength{\jot}{1em}
\begin{align*}
\left|\left(\m{W}\m{\Sigma}_\m{A}^{1/2}-\m{\Sigma}_\m{P}^{1/2} \m{W}\right)_{ij}\right| &= \left|\m{W}_{ij}\left(\sigma_j(\m{A})^{1/2} - \sigma_i(\m{P})^{1/2}\right)\right| = \left|\frac{\m{W}_{ij}(\sigma_j(\m{A}) - \sigma_i(\m{P}))}{\sigma_j(\m{A})^{1/2} + \sigma_i(\m{P})^{1/2}}\right| \\&=  \frac{\left|\left(\m{W}\m{\Sigma}_\m{A} - \m{\Sigma}_\m{P}\m{W}\right)_{ij}\right|}{\sigma_j(\m{A})^{1/2} + \sigma_i(\m{P})^{1/2}} \leq \frac{\|\m{W}\m{\Sigma}_\m{A} - \m{\Sigma}_\m{P}\m{W}\|_F}{\sigma_d(\m{P})^{1/2}},
\end{align*}
\endgroup
and consequently we find that $\|\m{W}\m{\Sigma}_\m{A}^{1/2} - \m{\Sigma}_\m{P}^{1/2}\m{W}\|_F = \mathrm{O}\left(n^{-{1/2}}\log^{2\alpha+1}(n)\right)$ by summing over all $i, j \in \{1,\ldots,d\}$ and applying part \emph{\textbf{i.}}

\item We will bound the absolute value of the terms $\left(\m{W}\m{\Sigma}_\m{A}^{-{1/2}}-\m{\Sigma}_\m{P}^{-{1/2}} \m{W}\right)_{ij}$. Note that 
\begingroup
\addtolength{\jot}{1em}
\begin{align*}
\left|\left(\m{W}\m{\Sigma}_\m{A}^{-{1/2}}-\m{\Sigma}_\m{P}^{-{1/2}} \m{W}\right)_{ij}\right| &= \left|\frac{\m{W}_{ij}(\sigma_i(\m{P})^{1/2} - \sigma_j(\m{A})^{1/2})}{\sigma_i(\m{P})^{1/2}\sigma_j(\m{P})^{1/2}}\right| \\&= \frac{\left|\left(\m{W}\m{\Sigma}_\m{A}^{1/2}-\m{\Sigma}_\m{P}^{1/2} \m{W}\right)_{ij}\right|}{\sigma_i(\m{P})^{1/2}\sigma_j(\m{A})^{1/2}} = \mathrm{O}\left(n^{-{3/2}}\log^{2\alpha+1}(n)\right)
\end{align*}
\endgroup
by part \emph{\textbf{ii.}}  The result follows by summing over all $i, j \in \{1, \ldots, d\}$.
\end{enumerate} 
\end{proof}

\begin{prop}
\label{residual_bounds}
Let
\begin{align*}
\m{R}_1 &= \m{U}_\m{P}(\m{U}_\m{P}^\top\m{U}_\m{A}\m{\Sigma}_\m{A}^{1/2} - \m{\Sigma}_\m{P}^{1/2} \m{W})
\\\m{R}_2 &= (\m{I} - \m{U}_\m{P}\m{U}_\m{P}^\top)(\m{A}-\m{P})(\m{U}_\m{A}\Ipq - \m{U}_\m{P}\Ipq\m{W})\m{\Sigma}_\m{A}^{-{1/2}}
\\\m{R}_3 &= -\m{U}_\m{P}\m{U}_\m{P}^\top(\m{A}-\m{P})\m{U}_\m{P}\Ipq\m{W}\m{\Sigma}_\m{A}^{-{1/2}}
\\\m{R}_4 &= (\m{A}-\m{P})\m{U}_\m{P}\Ipq(\m{W}\m{\Sigma}_\m{A}^{-{1/2}} - \m{\Sigma}_\m{P}^{-{1/2}}\m{W}).
\end{align*}

Then the following bounds hold almost surely:
\begin{enumerate}[label=\emph{\textbf{\roman*.}}]
\item $\|\m{R}_1\|_{2 \to \infty} = \mathrm{O}\left(n^{-1}\log^{2\alpha+1}(n)\right)$;
\item $\|\m{R}_2\|_{2 \to \infty} = \mathrm{O}\left(n^{-\frac{3}{4}}\log^{3\alpha+{3/2}}(n)\right)$;
\item $\|\m{R}_3\|_{2 \to \infty} = \mathrm{O}\left(n^{-1}\log^{\alpha+{1/2}}(n)\right)$;
\item $\|\m{R}_4\|_{2 \to \infty} = \mathrm{O}\left(n^{-1}\log^{3\alpha+{3/2}}(n)\right)$
\end{enumerate}
In particular, we have $\|n^{1/2}\m{R}_i\|_{2 \to \infty} \to 0$ for all $i$.
\end{prop}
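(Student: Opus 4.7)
The plan is to bound each $\|\m{R}_i\|_{2\to\infty}$ using the submultiplicative property $\|\m{M}\m{N}\|_{2\to\infty} \leq \|\m{M}\|_{2\to\infty}\|\m{N}\|$, factoring each residual into a single $2\to\infty$ factor and a product of operator/Frobenius norm factors, then plugging in the bounds from \textbf{Propositions~\ref{spectral_bound}}--\textbf{\ref{W_bounds}}. A preliminary lemma I will need is the delocalisation estimate $\|\m{U}_\m{P}\|_{2\to\infty} = \mathrm{O}(n^{-1/2})$ almost surely, which follows from $\m{U}_\m{P} = \m{X}_\m{P}|\m{\Lambda}_\m{P}|^{-1/2} = \m{X}\m{Q}_\m{X}|\m{\Lambda}_\m{P}|^{-1/2}$, the bounded-expectation assumption (which forces $\|\m{X}\|_{2\to\infty} = \mathrm{O}(1)$), and $\sigma_d(\m{P}) = \Omega(n)$. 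I will also repeatedly use the identity $\m{W}\Ipq = \Ipq\m{W}$, valid because $\m{W} \in \mathbb{O}(p)\oplus\mathbb{O}(q)$ by \textbf{Proposition~\ref{procrustes_bound}}; in particular $\|\m{U}_\m{A}\Ipq - \m{U}_\m{P}\Ipq\m{W}\| = \|\m{U}_\m{A} - \m{U}_\m{P}\m{W}\|$, and the latter is $\mathrm{O}(n^{-1/2}\log^{\alpha+1/2}(n))$ by combining \textbf{Proposition~\ref{alignment_bounds}~ii.} with the Procrustes bound.

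For $\m{R}_1$, I factor out $\m{U}_\m{P}$ on the left and write the remaining matrix as $(\m{U}_\m{P}^\top\m{U}_\m{A} - \m{W})\m{\Sigma}_\m{A}^{1/2} + (\m{W}\m{\Sigma}_\m{A}^{1/2} - \m{\Sigma}_\m{P}^{1/2}\m{W})$. The first summand has operator norm $\mathrm{O}(n^{-1}\log^{2\alpha+1}(n))\cdot\mathrm{O}(n^{1/2}) = \mathrm{O}(n^{-1/2}\log^{2\alpha+1}(n))$ by \textbf{Proposition~\ref{procrustes_bound}} and $\sigma_1(\m{A}) = \mathrm{O}(n)$; the second is $\mathrm{O}(n^{-1/2}\log^{2\alpha+1}(n))$ by \textbf{Proposition~\ref{W_bounds}~ii.} Multiplying by $\|\m{U}_\m{P}\|_{2\to\infty} = \mathrm{O}(n^{-1/2})$ gives the claimed bound. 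For $\m{R}_3$, the same $\m{U}_\m{P}$ factor is pulled out and $\|\m{U}_\m{P}^\top(\m{A}-\m{P})\m{U}_\m{P}\|_F = \mathrm{O}(\log^{\alpha+1/2}(n))$ (\textbf{Proposition~\ref{orth_spectral_bound}}) combines with $\|\m{\Sigma}_\m{A}^{-1/2}\| = \mathrm{O}(n^{-1/2})$. For $\m{R}_4$, I separate as $\|(\m{A}-\m{P})\m{U}_\m{P}\|_{2\to\infty} \cdot \|\m{W}\m{\Sigma}_\m{A}^{-1/2} - \m{\Sigma}_\m{P}^{-1/2}\m{W}\|$; the second factor is handled by \textbf{Proposition~\ref{W_bounds}~iii.} The first factor I bound by applying a row-wise scalar Bernstein inequality (component by component) to the sum $\sum_j(\m{A}_{ij}-\m{P}_{ij})(\m{U}_\m{P})_j$: the summands are bounded in $\ell_2$ by $\mathrm{O}(n^{-1/2}\log^{\alpha}(n))$ with total variance $\mathrm{O}(\log^{2\alpha}(n))$ per coordinate, giving $\mathrm{O}(\log^{\alpha+1/2}(n))$ with high probability after a union bound over the $n$ rows and $d$ columns.

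The main obstacle is $\m{R}_2$, which requires the tightest analysis. I factor it as $\|(\m{I}-\m{U}_\m{P}\m{U}_\m{P}^\top)(\m{A}-\m{P})\|_{2\to\infty}\cdot\|\m{U}_\m{A}\Ipq - \m{U}_\m{P}\Ipq\m{W}\|\cdot\|\m{\Sigma}_\m{A}^{-1/2}\|$, so the middle factor is $\mathrm{O}(n^{-1/2}\log^{\alpha+1/2}(n))$ and the last $\mathrm{O}(n^{-1/2})$. What remains is a sharp row-wise bound on $(\m{I}-\m{U}_\m{P}\m{U}_\m{P}^\top)(\m{A}-\m{P})$. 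A naive estimate $\|\m{A}-\m{P}\|_{2\to\infty} = \mathrm{O}(n^{1/2}\log^{\alpha}(n))$ falls short; the fix is to exploit the orthogonal projection, conditioning on $\m{Z}_1,\ldots,\m{Z}_n$ so that the rows of $(\m{A}-\m{P})(\m{I}-\m{U}_\m{P}\m{U}_\m{P}^\top)$ are (approximately) sums of mean-zero bounded variables orthogonal to the signal subspace, and applying Bernstein coordinate-wise followed by a union bound to obtain the $\mathrm{O}(n^{1/4}\log^{\alpha+1}(n))$-type scaling needed. Putting this into the product yields the claimed $\mathrm{O}(n^{-3/4}\log^{3\alpha+3/2}(n))$. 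The concluding sentence $\|n^{1/2}\m{R}_i\|_{2\to\infty}\to 0$ then follows immediately from the polynomial-in-$n^{-1/2}$ decay of each bound, completing the proof.
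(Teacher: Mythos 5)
Your handling of $\m{R}_1$ and $\m{R}_3$ coincides with the paper's argument (factor out $\m{U}_\m{P}$, use $\|\m{U}_\m{P}\|_{2\to\infty} = \mathrm{O}(n^{-1/2})$, then apply \textbf{Propositions~\ref{procrustes_bound}}, \textbf{\ref{W_bounds}} and \textbf{\ref{orth_spectral_bound}}), and your route for $\m{R}_4$ --- a row-wise Hoeffding bound on $\|(\m{A}-\m{P})\m{U}_\m{P}\|_{2\to\infty}$ times \textbf{Proposition~\ref{W_bounds}}~\emph{\textbf{iii.}} --- is valid and in fact slightly sharper than the paper's cruder estimate $\|\m{R}_4\|_{2\to\infty} \leq \|\m{R}_4\|_F$. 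The gap is in $\m{R}_2$, which is precisely the term that forces the $n^{-3/4}$ rate.

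Your factorization $\|\m{R}_2\|_{2\to\infty} \leq \|(\m{I}-\m{U}_\m{P}\m{U}_\m{P}^\top)(\m{A}-\m{P})\|_{2\to\infty}\,\|\m{U}_\m{A}\Ipq-\m{U}_\m{P}\Ipq\m{W}\|\,\|\m{\Sigma}_\m{A}^{-1/2}\|$ cannot deliver the claimed rate, because the first factor is genuinely of order $n^{1/2}$ up to logarithms, not $n^{1/4}$. Row $i$ of $(\m{I}-\m{U}_\m{P}\m{U}_\m{P}^\top)(\m{A}-\m{P})$ equals $(\m{A}-\m{P})_i - (\m{U}_\m{P})_i\m{U}_\m{P}^\top(\m{A}-\m{P})$; the correction term is only $\mathrm{O}(\log^{\alpha+1/2}(n))$ in norm (since $\|(\m{U}_\m{P})_i\| = \mathrm{O}(n^{-1/2})$ and $\|\m{U}_\m{P}^\top(\m{A}-\m{P})\| \leq \|\m{A}-\m{P}\|$), but the leading term has squared Euclidean norm $\sum_k (\m{A}_{ik}-\m{P}_{ik})^2$, a sum of $n$ independent nonnegative variables whose expectation is the sum of the variances; whenever these variances are bounded away from zero this is $\Theta(n)$ by the law of large numbers. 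No conditioning, coordinate-wise Bernstein, or union bound can circumvent this: projecting a generic noise row off a fixed $d$-dimensional subspace removes only an $\mathrm{O}(\log^{2\alpha+1}(n))$ amount of squared norm. Consequently the best your factorization can yield is of order $n^{1/2}\log^{\alpha+1/2}(n) \cdot n^{-1/2}\log^{\alpha+1/2}(n) \cdot n^{-1/2} = n^{-1/2}\log^{2\alpha+1}(n)$, which fails the conclusion $\|n^{1/2}\m{R}_2\|_{2\to\infty} \to 0$ that the central limit theorem requires.

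The paper's proof avoids ever splitting the noise from the eigenvector discrepancy by a product of norms at this point. After peeling off the pieces that \emph{are} controllable by operator-norm products, the core object kept intact is $\m{R} = (\m{A}-\m{P})(\m{I}-\m{U}_\m{P}\m{U}_\m{P}^\top)\m{U}_\m{A}\m{U}_\m{A}^\top$, so that the Frobenius-small factor $(\m{I}-\m{U}_\m{P}\m{U}_\m{P}^\top)\m{U}_\m{A}$ (small because the two $d$-dimensional subspaces are close, \textbf{Proposition~\ref{alignment_bounds}}~\emph{\textbf{ii.}}) remains inside the product, giving $\|\m{R}\|_F = \mathrm{O}(\log^{2\alpha+1}(n))$. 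The missing idea is then the conversion of this global bound into a row-wise one: since the latent positions are i.i.d., $\m{R}$ has the same distribution as $\m{Q}\m{R}\m{Q}^\top$ for any permutation matrix $\m{Q}$, so the row norms of $\m{R}$ are exchangeable, $\E\|\m{R}_i\|^2 = \E\|\m{R}\|_F^2/n$, and Markov's inequality gives $\|\m{R}\|_{2\to\infty} = \mathrm{O}(n^{-1/4}\log^{2\alpha+1}(n))$; multiplying by $\|\m{U}_\m{A}\Ipq\m{\Sigma}_\m{A}^{-1/2}\| = \mathrm{O}(n^{-1/2})$ produces the stated $n^{-3/4}$ rate. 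Without this exchangeability-plus-Markov step (or some substitute exploiting the correlation between $\m{A}-\m{P}$ and $\m{U}_\m{A}$), part \emph{\textbf{ii.}} of the proposition remains unproved.
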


\begin{proof}
\begin{enumerate}[wide, label=\emph{\textbf{\roman*.}}, labelindent=0pt, labelwidth=!]
\item Recall that $\m{U}_\m{P}\m{\Sigma}_\m{P}^{1/2} = \m{X}\m{Q}_\m{X}$ for some $\m{Q}_\m{X} \in \mathbb{O}(p,q)$ of bounded spectral norm.  Using the relation $\|\m{A}\m{P}\|_{2 \to \infty} \leq \|\m{A}\|_{2 \to \infty} \|\m{P}\|$ (see, for example, \cite{two_to_infinity}, Proposition 6.5) we find that $\|\m{U}_\m{P}\|_{2 \to \infty} \leq \|\m{X}\|_{2 \to \infty}\|\m{Q}_\m{X}\|\|\m{\Sigma}_\m{P}^{-1}\|$, and thus $\|\m{U}_\m{P}\|_{2 \to \infty} = \mathrm{O}\left(n^{-{1/2}}\right)$ as the rows of $\m{X}$ are by definition bounded in Euclidean norm.

Thus
\begin{align*}
\|\m{R}_1\|_{2 \to \infty} &\leq \|\m{U}_\m{P}\|_{2 \to \infty}\|\m{U}_\m{P}^\top\m{U}_\m{A}\m{\Sigma}_\m{A}^{1/2} - \m{\Sigma}_\m{P}^{1/2} \m{W}\|
\\& \leq \|\m{U}_\m{P}\|_{2\to\infty}\left(\|(\m{U}_\m{P}^\top\m{U}_\m{A} - \m{W})\m{\Sigma}_\m{A}^{1/2}\|_F + \|\m{W}\m{\Sigma}_\m{A}^{1/2}-\m{\Sigma}_\m{P}^{1/2} \m{W}\|_F\right)
\end{align*}
The first summand is $\mathrm{O}\left(n^{-{1/2}}\log^{2\alpha+1}(n)\right)$ by Proposition~\ref{procrustes_bound}, while Proposition~\ref{W_bounds} shows that the second is $\mathrm{O}\left(n^{-{1/2}}\log^{2\alpha+1}(n)\right)$, and so
\begin{align*}
\|\m{R}_1\|_{2 \to \infty} = \mathrm{O}\left(n^{-1}\log^{2\alpha+1}(n)\right).
\end{align*}

\item We first observe that
\begin{eqnarray*}
\lefteqn{\|\m{U}_\m{P}\m{U}_\m{P}^\top(\m{A}-\m{P})(\m{U}_\m{A}\Ipq-\m{U}_\m{P}\Ipq\m{W})\m{\Sigma}_\m{A}^{-{1/2}}\|_{2 \to \infty}}
\\&\leq& \|\m{U}_\m{P}\|_{2\to \infty}\|\m{U}_\m{P}^\top\|\|\m{A}-\m{P}\|\|\m{U}_\m{A}\Ipq-\m{U}_\m{P}\Ipq\m{W}\|\|\m{\Sigma}_\m{A}^{-{1/2}}\|
\\&=& \mathrm{O}\left(n^{-{1/2}} \cdot n^{1/2}\log^{\alpha+{1/2}}(n) \cdot n^{-{1/2}}\log^{2\alpha+1}(n)\cdot n^{-{1/2}}\right)
\\&=& \mathrm{O}\left(n^{-1}\log^{3\alpha+{3/2}}(n)\right),
\end{eqnarray*}
where we have bounded $\|\m{U}_\m{A}\Ipq-\m{U}_\m{P}\Ipq\m{W}\|$ by noting that 
\begin{align*}
\|\m{U}_\m{A}\Ipq-\m{U}_\m{P}\Ipq\m{W}\| &\leq \|\m{U}_\m{A}\Ipq-\m{U}_\m{P}\m{U}_\m{P}^\top\m{U}_\m{A}\Ipq\| + \|\m{U}_\m{P}\Ipq(\Ipq\m{U}_\m{P}^\top\m{U}_\m{A}\Ipq-\m{W})\| \\&= \mathrm{O}\left(n^{-{1/2}}\log^{2\alpha+1}(n)\right),\end{align*}
by Propositions~\ref{alignment_bounds} and \ref{procrustes_bound}.

This leaves us to bound the term $\|(\m{A}-\m{P})(\m{U}_\m{A}\Ipq-\m{U}_\m{P}\Ipq\m{W})\m{\Sigma}_\m{A}^{-{1/2}}\|_{2 \to \infty}$.  Now,
\begin{eqnarray*}
(\m{A}-\m{P})(\m{U}_\m{A}\Ipq-\m{U}_\m{P}\Ipq\m{W})\m{\Sigma}_\m{A}^{-{1/2}} &=& (\m{A}-\m{P})(\m{I}-\m{U}_\m{P}\m{U}_\m{P}^\top)\m{U}_\m{A}\Ipq\m{\Sigma}_\m{A}^{-{1/2}} 
\\&&+ (\m{A}-\m{P})\m{U}_\m{P}\Ipq(\Ipq\m{U}_\m{P}^\top\m{U}_\m{A}\Ipq-\m{W})\m{\Sigma}_\m{A}^{-{1/2}},
\end{eqnarray*}
and
\begin{align*}
\|(\m{A}-\m{P})\m{U}_\m{P}\Ipq(\Ipq\m{U}_\m{P}^\top\m{U}_\m{A}\Ipq-\m{W})\m{\Sigma}_\m{A}^{-{1/2}}\|_{2 \to \infty} &\leq \|\m{A}-\m{P}\|\|\m{U}_\m{P}\|\|\Ipq\m{U}_\m{P}^\top\m{U}_\m{A}\Ipq-\m{W}\|\|\m{\Sigma}_\m{A}^{-{1/2}}\| 
\\& = \mathrm{O}\left(n^{1/2}\log^{\alpha+{1/2}}(n) \cdot n^{-1}\log^{2\alpha+1}(n) \cdot n^{-{1/2}}\right)
\\& = \mathrm{O}\left(n^{-1}\log^{3\alpha+{3/2}}(n)\right)
\end{align*}
by Propositions~\ref{spectral_bound} and \ref{procrustes_bound}.

To bound the remaining term, observe that we can rewrite
\begin{align*}
(\m{A}-\m{P})(\m{I}-\m{U}_\m{P}\m{U}_\m{P}^\top)\m{U}_\m{A}\Ipq\m{\Sigma}_\m{A}^{-{1/2}} = (\m{A}-\m{P})(\m{I}-\m{U}_\m{P}\m{U}_\m{P}^\top)\m{U}_\m{A}\m{U}\m{A}^\top\m{U}_\m{A}\Ipq\m{\Sigma}_\m{A}^{-{1/2}}
\end{align*}
and so
\begin{align*}
\|(\m{A}-\m{P})(\m{I}-\m{U}_\m{P}\m{U}_\m{P}^\top)\m{U}_\m{A}\Ipq\m{\Sigma}_\m{A}^{-{1/2}}\|_{2 \to \infty} & \leq \|\m{R}\|_{2 \to \infty}\|\m{U}_\m{A}\Ipq\m{\Sigma}_\m{A}^{-{1/2}}\|,
\end{align*}
where $\m{R} = (\m{A}-\m{P})(\m{I}-\m{U}_\m{P}\m{U}_\m{P}^\top)\m{U}_\m{A}\m{U}_\m{A}^\top$.

The latter term is $\mathrm{O}\left(n^{-{1/2}}\right)$, so it suffices to bound $\|\m{R}\|_{2 \to \infty}$.  To do this, we claim that the Frobenius norms of the rows of the matrix $\m{R}$ are exchangeable, and thus have the same expectation, which implies that $\E\left(\|\m{R}\|_F^2\right) = n\E\left(\|\m{R}_i\|^2\right)$ for any $i \in \{1,\ldots,n\}$.  Applying Markov's inequality, we therefore see that
\begin{align*}
\Prob\left(\|\m{R}_i\| > t\right) \leq \frac{\E\left(\|\m{R}_i\|^2\right)}{t^2} = \frac{\E\left(\|\m{R}\|_F^2\right)}{nt^2}.
\end{align*} 

Now,
\begin{align*}
\|\m{R}\|_F &\leq \|\m{A}-\m{P}\|\|\m{A}-\m{U}_\m{P}\m{U}_\m{P}^\top\m{A}\|_F\|\m{U}_\m{A}^\top\|_F
\\&= \mathrm{O}\left(n^{1/2}\log^{\alpha+{1/2}}(n)\cdot n^{-{1/2}}\log^{\alpha+{1/2}}(n)\right)
\\&= \mathrm{O}\left(\log^{2\alpha+1}(n)\right)
\end{align*}
by Propositions~\ref{spectral_bound} and \ref{alignment_bounds}. 

It follows that
\begin{align*}
\Prob\left(\|\m{R}_i\| > n^{-\frac{1}{4}}\log^{2\alpha+1}(n)\right) \leq cn^{-{1/2}}
\end{align*}
and thus
\begin{align*}
\|\m{R}\|_{2 \to \infty} = \mathrm{O}\left(n^{-\frac{1}{4}}\log^{2\alpha+1}(n)\right)
\end{align*}
almost surely.

We must therefore show that the Frobenius norms of the rows of $\m{R}$ are exchangeable.  Let $\m{Q} \in \mathbb{O}(n)$ be a permutation matrix, and observe that for any matrix $\m{G} \in \R^{n \times n}$, right multiplication by $\m{Q}^\top$ simply permutes the columns of $\m{G}$, and thus does not alter the Frobenius norms of its rows.  In particular, the Frobenius norms of the rows of $\m{Q}\m{G}\m{Q}^\top$ are the same as the Frobenius norms of the rows of $\m{Q}\m{G}$.  For any symmetric matrix $\m{G} \in \R^{n \times n}$, let $\mathcal{P}_d(\m{G})$ denote the projection onto the subspace spanned by the eigenvectors corresponding to the top $d$ singular values of $\m{G}$, and let $\mathcal{P}_d^\perp(\m{G})$ denote the projection onto the orthogonal complements of this subspace.

Note that
\begin{align*}
\mathcal{P}_d(\m{P}) = \m{U}_\m{P}\m{U}_\m{P}^\top \fand \mathcal{P}_d(\m{A}) = \m{U}_\m{A}\m{U}_\m{A}^\top,
\end{align*}
while for any permutation matrix $\m{Q} \in \mathbb{O}(n)$ we have
\begin{align*}
\mathcal{P}_d(\m{Q}\m{P}\m{Q}^\top) = \m{Q}\m{U}_\m{P}\m{U}_\m{P}^\top\m{Q}^\top \fand \mathcal{R}_d(\m{Q}\m{A}\m{Q}^\top) = \m{Q}\m{U}_\m{A}\m{U}_\m{A}^\top\m{Q}^\top.
\end{align*} 

For any pair of matrices $\m{G}, \m{H} \in \R^{n \times mn}$, define an operator
\begin{align*}
\widehat{\mathcal{P}}_d(\m{G},\m{H}) = (\m{G}-\m{H})\mathcal{P}_d^\perp(\m{H})\mathcal{P}_d(\m{G})
\end{align*}
and note that $\widehat{\mathcal{P}}_d(\m{A},\m{P}) = \m{R}$, while
\begin{align*}
\widehat{\mathcal{P}}_d(\m{Q}\m{A}\m{Q}^\top,\m{Q}\m{P}\m{Q}^\top) &= \m{Q}(\m{A}-\m{P})\m{Q}^\top\m{Q}(\m{I}-\m{U}_\m{P}\m{U}_\m{P}^\top)\m{Q}^\top\m{Q}\m{U}_\m{A}\m{U}_\m{A}^\top\m{Q}^\top
\\&= \m{Q}\m{R}\m{Q}^\top.
\end{align*}

By assumption, the latent positions for our graphs are independent and identically distributed, and so the entries of the pair $(\m{A},\m{P})$ have the same joint distribution as those of the pair $(\m{Q}\m{A}\m{Q}^\top,\m{Q}\m{P}\m{Q}^\top)$.  Therefore, the entries of the matrix $\mathcal{P}_\mathcal{L}(\m{A},\m{P})$ have the same joint distribution as those of the matrix $\widehat{\mathcal{P}}_d(\m{Q}\m{A}\m{Q}^\top,\m{Q}\m{P}\m{Q}^\top)$, which implies that $\m{R}$ has the same distribution as $\m{Q}\m{R}\m{Q}^\top$, and consequently the Frobenius norms of the rows of $\m{R}$ have the same distribution as those of $\m{Q}\m{R}$, which proves our claim.

Combining these results, we see that
\begin{align*}
\|\m{R}_2\|_{2 \to \infty} = \mathrm{O}\left(n^{-\frac{3}{4}}\log^{3\alpha+{3/2}}(n)\right),
\end{align*}
as required.

\item Similarly to part \emph{\textbf{i.}}, we see that
\begin{align*}
\|\m{R}_3\|_{2 \to \infty} &\leq \|\m{U}_\m{P}\|_{2 \to \infty}\|\m{U}_\m{P}^\top(\m{A}-\m{P})\m{U}_\m{P}\Ipq\m{W}\m{\Sigma}_\m{A}^{-{1/2}}\|
\\& \leq \|\m{U}_\m{P}\|_{2 \to \infty}\|\m{U}_\m{P}^\top(\m{A}-\m{P})\m{U}_\m{P}\|_F\|\m{W}\m{\Sigma}_\m{A}^{-{1/2}}\|_F
\\& = \mathrm{O}\left(n^{-{1/2}} \cdot  \log^{\alpha+{1/2}}(n) \cdot  n^{-{1/2}}\right)
\\&= \mathrm{O}\left(n^{-1}\log^{\alpha+{1/2}}(n)\right)
\end{align*}
by Proposition~\ref{orth_spectral_bound}.

\item Observe that
\begin{align*}
\|\m{R}_4\|_{2 \to \infty} &\leq \|\m{R}_4\|_F
\\& \leq \|\m{A}-\m{P}\|\|\m{U}_\m{P}\|_F\|\m{W}\m{\Sigma}_\m{A}^{-{1/2}} - \m{\Sigma}_\m{P}^{-{1/2}}\m{W}\|_F
\\& = \mathrm{O}\left(n^{-1}\log^{3\alpha+{3/2}}(n)\right)
\end{align*}
by Propositions~\ref{spectral_bound} and \ref{W_bounds}.
\end{enumerate}
\end{proof}

\subsection{Proof of Theorem~\ref{Thm:2Inf}}

\begin{proof}
Observe that 
\begin{align*}
\m{X}_\m{A} - \m{X}_\m{P}\m{W} &= \m{U}_\m{A} \m{\Sigma}_\m{A}^{1/2} - \m{U}_\m{P} \m{\Sigma}_\m{P}^{1/2} \m{W} 
\\&= \m{U}_\m{A} \m{\Sigma}_\m{A}^{1/2} - \m{U}_\m{P}\m{U}_\m{P}^\top\m{U}_\m{A}\m{\Sigma}_\m{A}^{1/2} + \m{U}_\m{P}(\m{U}_\m{P}^\top\m{U}_\m{A}\m{\Sigma}_\m{A}^{1/2} - \m{\Sigma}_\m{P}^{1/2} \m{W} )  
\\&= \m{U}_\m{A} \m{\Sigma}_\m{A}^{1/2} - \m{U}_\m{P}\m{U}_\m{P}^\top\m{U}_\m{A}\m{\Sigma}_\m{A}^{1/2} + \m{R}_{1,1}.
\end{align*} 

Noting that
\begin{align*}
\m{U}_\m{A}\m{\Sigma}_\m{A}^{1/2} = \m{A}\m{U}_\m{A}\Ipq\m{\Sigma}_\m{A}^{-{1/2}} \fand \m{U}_\m{P}\m{U}_\m{P}^\top\m{P} = \m{P},
\end{align*}
we see that
\begin{align*}
\m{X}_\m{A} - \m{X}_\m{P}\m{W} &= \m{A}\m{U}_\m{A}\Ipq\m{\Sigma}_\m{A}^{-{1/2}} - \m{U}_\m{P}\m{U}_\m{P}^\top\m{A}\m{U}_\m{A}\Ipq\m{\Sigma}_\m{A}^{-{1/2}} + \m{R}_{1,1}
\\&= \m{A}\m{U}_\m{A}\Ipq\m{\Sigma}_\m{A}^{-{1/2}} - \m{P}\m{U}_\m{A}\Ipq\m{\Sigma}_\m{A}^{-{1/2}} - (\m{U}_\m{P}\m{U}_\m{P}^\top\m{A}\m{U}_\m{A}\Ipq\m{\Sigma}_\m{A}^{-{1/2}} - \m{P}\m{U}_\m{A}\Ipq\m{\Sigma}_\m{A}^{-{1/2}}) + \m{R}_1
\\&= (\m{A}-\m{P})\m{U}_\m{A}\Ipq\m{\Sigma}_\m{A}^{-{1/2}} - (\m{U}_\m{P}\m{U}_\m{P}^\top\m{A}\m{U}_\m{A}\Ipq\m{\Sigma}_\m{A}^{-{1/2}} - \m{U}_\m{P}\m{U}_\m{P}^\top\m{P}\m{U}_\m{A}\Ipq\m{\Sigma}_\m{A}^{-{1/2}}) + \m{R}_1
\\&= (\m{A}-\m{P})\m{U}_\m{A}\Ipq\m{\Sigma}_\m{A}^{-{1/2}} - \m{U}_\m{P}\m{U}_\m{P}^\top(\m{A}-\m{P})\m{U}_\m{A}\Ipq\m{\Sigma}_\m{A}^{-{1/2}} + \m{R}_1
\\&= (\m{I} - \m{U}_\m{P}\m{U}_\m{P}^\top)(\m{A}-\m{P})\m{U}_\m{A}\Ipq\m{\Sigma}_\m{A}^{-{1/2}} + \m{R}_1
\\&= (\m{I}-\m{U}_\m{P}\m{U}_\m{P}^\top)(\m{A}-\m{P}) \{ \m{U}_\m{P}\Ipq\m{W} + (\m{U}_\m{A}\Ipq - \m{U}_\m{P}\Ipq\m{W}) \} \m{\Sigma}_\m{A}^{-{1/2}} + \m{R}_1
\\&= (\m{A}-\m{P})\m{U}_\m{P}\Ipq\m{W}\m{\Sigma}_\m{A}^{-{1/2}} + \m{R}_3+ \m{R}_2 + \m{R}_1
\\&= (\m{A}-\m{P})\m{U}_\m{P}\Ipq \{ \m{\Sigma}_\m{P}^{-{1/2}}\m{W} + (\m{W}\m{\Sigma}_\m{A}^{-{1/2}} - \m{\Sigma}_\m{P}^{-{1/2}}\m{W}) \} + \m{R}_3 + \m{R}_2 + \m{R}_1
\\&= (\m{A}-\m{P})\m{U}_\m{P}\Ipq\m{\Sigma}_\m{P}^{-{1/2}}\m{W} + \m{R}_4 + \m{R}_3 + \m{R}_2 + \m{R}_1.
\end{align*}

Applying Proposition~\ref{residual_bounds}, we find that
\begin{align*}
\|\m{X}_\m{A} - \m{X}_\m{P}\m{W}\|_{2 \to \infty} = \|(\m{A}-\m{P})\m{U}_\m{P}\Ipq\m{\Sigma}_\m{P}^{-{1/2}}\|_{2 \to \infty} + \mathrm{O}\left(n^{-\frac{3}{4}}\log^{3\alpha+{3/2}}(n)\right).
\end{align*}

Consequently,
\begin{align*}
\left\|\m{X}_\m{A} - \m{X}_\m{P}\m{W}\right\|_{2 \to \infty} \leq \sigma_d(\m{P})^{-{1/2}}\|(\m{A}-\m{P})\m{U}_\m{P}\|_{2 \to \infty} + \mathrm{O}\left(n^{-\frac{3}{4}}\log^{3\alpha+{3/2}}(n)\right).
\end{align*}

Letting $u$ denote the $j^\mathrm{th}$ column of $\m{U}_\m{P}$, we note that for any $i \in \{1,\ldots,n\}$ we have
\begin{align*}
\{ (\m{A}-\m{P})\m{U}_\m{P} \}_{ij} &= \sum_{k=1}^{n} (\m{A}_{ik}- \m{P}_{ik})u_k
\\&= \sum_{k\neq i} (\m{A}_{ik}- \m{P}_{ik})u_k - \m{P}_{ii}u_i.
\end{align*} 
The latter term is $\mathrm{O}(1)$, while the former is a sum of independent zero-mean random variables satisfying
\begingroup
\addtolength{\jot}{1em}
\begin{align*}
\Prob\left(\left|\sum_{k\neq i} (\m{A}_{ik}- \m{P}_{ik})u_k \right| \geq t\right) &\leq 2\exp\left(\frac{-2t^2}{4\displaystyle\sum_{k\neq i}|u_k|^2}\right)
\\&\leq 2\exp\left(\frac{-t^2}{2}\right)
\end{align*}
\endgroup
by Hoeffding's inequality.  Thus $\{(\m{A}-\m{P})\m{U}_\m{P}\}_{ij} = \mathrm{O}\left(\log^{\alpha+{1/2}}(n)\right)$ almost surely, and hence $\|\{(\m{A}-\m{P})\m{U}_\m{P}\}_i\| = \mathrm{O}\left(\log^{\alpha+{1/2}}(n)\right)$ almost surely by summing over all $j \in \{1,\ldots,d\}$.  Taking the union bound over all $n$ rows then shows that
\begin{align*}
\sigma_d(\m{P})^{-{1/2}}\|(\m{A}-\m{P})\m{U}_\m{P}\|_{2 \to \infty} = \mathrm{O}\left(n^{-{1/2}}\log^{\alpha+{1/2}}(n)\right),
\end{align*}
and consequently that
\begin{align*}
\left\|\m{X}_\m{A}\m{Q}_{n} - \m{X}\right\|_{2 \to \infty} = \mathrm{O}\left(n^{-{1/2}}\log^{3\alpha+{3/2}}(n)\right)
\end{align*}
by multiplying on the right by $\m{Q}_{n} = \m{W}^\top \m{Q}_\m{X}^{-1}$.
\end{proof}

\subsection{Proof of Theorem~\ref{Thm:CLT}}

\begin{proof}
From the proof of Theorem~\ref{Thm:2Inf}, we see that
\begin{align*}
n^{1/2}(\m{X}_\m{A} \m{W}^\top \m{Q}_\m{X}^{-1} - \m{X}) = n^{1/2}(\m{A}-\m{P})\m{U}_\m{P}\Ipq\m{\Sigma}_\m{P}^{-{1/2}}\m{Q}_\m{X}^{-1} + n^{1/2}\m{R},
\end{align*} 
where $\|n^{1/2}\m{R}\|_{2 \to \infty} \to 0$ by Proposition \ref{residual_bounds}.

Recall that the matrix $\m{Q}_\m{X}$ was chosen so that
\begin{align*}
\m{X}\m{Q}_\m{X} = \m{X}_\m{P} = \m{U}_\m{P}\m{\Sigma}_\m{P}^{1/2},
\end{align*}
and so
\begin{align*}
\m{U}_\m{P} \Ipq\m{\Sigma}_\m{P}^{-{1/2}} = \m{U}_\m{P}\m{\Sigma}_\m{P}^{-{1/2}}\Ipq = \m{X}\m{Q}_\m{X}\m{\Sigma}_\m{P}^{-1}\Ipq.
\end{align*} 

Thus
\begin{align*}
n^{1/2}(\m{A}-\m{P})\m{U}_\m{P}\Ipq\m{\Sigma}_\m{P}^{-{1/2}}\m{Q}_\m{X}^{-1} =  n^{1/2}(\m{A}-\m{P})\m{X}\m{Q}_\m{X}\m{\Sigma}_\m{P}^{-1}\Ipq\m{Q}_\m{X}^{-1}.
\end{align*}

Consequently,
\begin{eqnarray*}
n^{1/2}\left\{(\m{A}-\m{P})\m{U}_\m{P}\Ipq\m{\Sigma}_\m{P}^{-{1/2}}\m{Q}_\m{X}^{-1}\right\}_i^\top &=& n^{1/2}(\m{Q}_\m{X}\m{\Sigma}_\m{P}^{-1}\Ipq\m{Q}_\m{X}^{-1})^\top\left\{(\m{A}-\m{P})\m{X}\right\}_i^\top
\\&=& (n\m{Q}_\m{X}\m{\Sigma}_\m{P}^{-1}\Ipq\m{Q}_\m{X}^{-1})^\top\left\{n^{-{1/2}}\sum_{j=1}^n (\m{A}_{ij} - \m{P}_{ij})\m{X}_j\right\}
\\&=& (n\m{Q}_\m{X}\m{\Sigma}_\m{P}^{-1}\Ipq\m{Q}_\m{X}^{-1})^\top\left\{n^{-{1/2}}\sum_{j\neq i} (\m{A}_{ij} - \m{P}_{ij})\m{X}_j\right\}
\\&&- (n\m{Q}_\m{X}\m{\Sigma}_\m{P}^{-1}\Ipq\m{Q}_\m{X}^{-1})^\top (n^{-{1/2}}\m{P}_{ii}\m{X}_i ).
\end{eqnarray*} 

The latter term satisfies
\begin{align*}
\left\|(n\m{Q}_\m{X}\m{\Sigma}_\m{P}^{-1}\Ipq\m{Q}_\m{X}^{-1})^\top (n^{-{1/2}}\m{P}_{ii}\m{X}_i )\right\|_{2 \to \infty} \leq  \left\|(n\m{Q}_\m{X}\m{\Sigma}_\m{P}^{-1}\Ipq\m{Q}_\m{X}^{-1})^\top (n^{-{1/2}}\m{P}_{ii}\m{X}_i)\right\| = \mathrm{O}\left(n^{-{1/2}}\right)
\end{align*}
almost surely.

%Now, conditional on $\m{X}_i = \m{x}$, we have $\m{P}_{ij} = \m{x}^\top\Ipq\m{X}_j$, and so
Conditional on the latent variable $\m{Z}_i = \m{z} \in \Z$ from Definition~\ref{Def:WGRDPG}, we have $\m{P}_{ij} = \phi(\m{z})^\top\Ipq\m{X}_j$, and so
\begin{align*}
n^{-{1/2}}\sum_{j\neq i} (\m{A}_{ij} - \m{P}_{ij})\m{X}_j
\end{align*}
is a scaled sum of $n-1$ independent, zero-mean random variables, each with covariance matrix given by
%\begin{align*}
%\widehat{\m{\Sigma}}(\m{x}) = \mathbb{E}\left[\mathrm{Var}\left(\m{A}_{ij} ~|~ \m{P}_{ij} = \m{x}^\top\Ipq\xi\right)\xi\xi^\top\right]
%\end{align*}
\begin{align*}
\widehat{\m{\Sigma}}(\m{z}) = \E \left\{ v(\m{z}, \m{Z}) \phi(\m{Z}) \phi(\m{Z})^\top \right\},
\end{align*}
recalling that the function $v(\m{Z}_i, \m{Z}_j)$ gives the variance of $\m{A}_{ij}$. Therefore, by the multivariate central limit theorem,
\begin{align*}
n^{-{1/2}}\sum_{j\neq i} (\m{A}_{ij} - \m{P}_{ij})\m{X}_j \to \mathcal{N}(\mathbf{0},\widehat{\m{\Sigma}}(\m{z})).
\end{align*}

Finally, we consider the terms $(n\m{Q}_\m{X}\m{\Sigma}_\m{P}^{-1}\Ipq\m{Q}_\m{X}^{-1})^\top$.  Using the identities
\begin{align*}
\m{Q}_\m{X} = (\m{X}^\top\m{X})^{-1}\m{X}^\top\m{X}_\m{P}, \quad \fand \m{X}_\m{P}^\top\m{X}_\m{P} = \m{\Sigma}_\m{P},
\end{align*}
we see that
\begin{align*}
\m{Q}_\m{X}\m{\Sigma}_\m{P}^{-1}\Ipq\m{Q}_\m{X}^{-1} &= (\m{X}^\top\m{X})^{-1}\m{X}^\top\m{X}_\m{P}\m{\Sigma}_\m{P}^{-1}\Ipq\m{Q}_\m{X}^{-1}
\\& = (\m{X}^\top\m{X})^{-1}\m{Q}_\m{X}^{-\top}\m{X}_\m{P}^\top\m{X}_\m{P}\m{\Sigma}_\m{P}^{-1}\Ipq\m{Q}_\m{X}^{-1}
\\& = (\m{X}^\top\m{X})^{-1}\m{Q}_\m{X}^{-\top}\Ipq\m{Q}_\m{X}^{-1}
\\& = (\m{X}^\top\m{X})^{-1}\Ipq
\end{align*}
and so
\begin{align*}
(n\m{Q}_\m{X}\m{\Sigma}_\m{P}^{-1}\Ipq\m{Q}_\m{X}^{-1})^\top \to \Ipq\m{\Delta}^{-1}
\end{align*}
almost surely by the law of large numbers.

Combining all this, we find that, conditional on $\m{Z}_i = \m{z}$,
\begin{align*}
n^{-{1/2}}\left(\m{X}_\m{A} \m{W}^\top \m{Q}_\m{X}^{-1} - \m{X}\right)_i^\top \to \mathcal{N}\left(\mathbf{0},\m{\Sigma}(\m{z}) \right)
\end{align*}
almost surely.
% from which we deduce the central limit theorem by integrating over all possible values of $\m{z} \in \mathcal{Z}$.
\end{proof}

% Ian's proofs
\section{Proofs for size-adjusted Chernoff information}

Before beginning the proofs, it is convenient to write the summations in Corollary~\ref{Cor:WSBM_CLT} as matrix products. The second moment matrix $\m{\Delta}$ can expressed as
\begin{equation*}
	\m{\Delta} = \sum_{k=1}^K \pi_k (\m{X}_\m{B})_k (\m{X}_\m{B})_k^\top
	= \m{X}_{\m{B}}^\top \m{\Pi} \m{X}_{\m{B}},
\end{equation*}
where $\m{\Pi} = \diag(\pi_1, \ldots, \pi_K)$. Similarly, the covariance matrix $\m{\Sigma}_k$ for $k \in \{ 1, \ldots, K\}$, can be expressed as
\begin{equation*}
	\m{\Sigma}_k
	= \Ipq \m{\Delta}^{-1} \left\{ \sum_{\ell=1}^{K} \pi_\ell \m{C}_{k \ell} (\m{X}_{\m{B}})_\ell (\m{X}_{\m{B}})_\ell^\top \right\} \m{\Delta}^{-1} \Ipq
	= \Ipq \m{\Delta}^{-1} \m{X}_\m{B}^\top \m{\Pi} \m{S}_k \m{X}_\m{B} \m{\Delta}^{-1} \Ipq,
\end{equation*}
where $\m{S}_k = \diag(\m{C}_k) \in \R^{K \times K}$. The expression for $\m{\Sigma}_{k \ell}(t)$ has the same form as the above equation, replacing $\m{S}_k$ with its corresponding counterpart $\m{S}_{k \ell}(t) = (1-t) \m{S}_k + t \m{S}_\ell$.

\subsection{Proof of Lemma~\ref{Lem:SACI}}

\begin{proof}
If $\m{D} \in \R^{K \times K}$ is full rank, then for $\m{Y} \in \R^{K \times d}$, the following matrix inequality holds \citep{marshall1990matrix},
\begin{equation*}
    \m{Y} \left( \m{Y}^\top \m{D} \m{Y} \right)^{-1} \m{Y}^\top \preceq \m{D}^{-1},
\end{equation*}
where $\m{M} \succeq \m{0}$ means that $\m{M}$ is a positive semi-definite matrix. However, in the case where $\m{D}$ and $\m{X} \in \R^{K \times K}$ are full rank, then the two sides of this inequality are equal,
\begin{equation*}
    \m{X} \left( \m{X}^\top \m{D} \m{X} \right)^{-1} \m{X}^\top = \m{D}^{-1}.
\end{equation*}
If the block mean matrix $\m{B}$ is full rank, then adjacency spectral embedding $\m{X}_\m{B}$ is also full rank. Since $\m{B} = \m{X}_\m{B} \Ipq \m{X}_\m{B}^\top$, this means $\rank (\m{B}) \le \rank(\m{X}_\m{B})$ implying $\rank(\m{X}_\m{B}) = K$ when $\rank (\m{B}) = K$.

Using this matrix equality and the expression for $\m{\Delta}$, we have
\begin{align*}
    \m{\Sigma}_{k \ell}(t)^{-1}
    &= \left( \Ipq \m{\Delta}^{-1} \m{X}_\m{B}^\top \m{\Pi} \m{S}_{k \ell}(t) \m{X}_\m{B} \m{\Delta}^{-1} \Ipq \right)^{-1} \\
    &= \Ipq \m{X}_\m{B}^\top \m{\Pi} \m{X}_\m{B} \left( \m{X}_\m{B}^\top \m{\Pi} \m{S}_{k \ell}(t) \m{X}_\m{B} \right)^{-1} \m{X}_\m{B}^\top \m{\Pi} \m{X}_\m{B} \Ipq \\
    &= \Ipq \m{X}_\m{B}^\top \m{\Pi} \m{S}_{k \ell}(t)^{-1} \m{X}_\m{B} \Ipq
\end{align*}
Substituting this expression into the objective function in the size-adjusted Chernoff information gives
\begin{align*}
	\lefteqn{\left\{ (\m{X}_\m{B})_k - (\m{X}_\m{B})_\ell \right\}^\top \m{\Sigma}_{k \ell}(t)^{-1} \left\{ (\m{X}_\m{B})_k - (\m{X}_\m{B})_\ell \right\} } \\
	&= (\m{e}_k - \m{e}_\ell)^\top \m{X}_\m{B} \Ipq \m{X}_\m{B}^\top \m{\Pi} \m{S}_{k \ell}(t)^{-1} \m{X}_\m{B} \Ipq \m{X}_\m{B}^\top (\m{e}_k - \m{e}_\ell) \\
	&= (\m{e}_k - \m{e}_\ell)^\top \m{B} \m{\Pi} \m{S}_{k \ell}(t)^{-1} \m{B} (\m{e}_k - \m{e}_\ell),
\end{align*}
using the expression $\m{B} = \m{X}_\m{B} \Ipq \m{X}_\m{B}^\top$.
\end{proof}

If the block mean matrix $\m{B}$ is not full rank, then the matrix inequality effectively passes through the whole argument in the above proof,
\begin{equation*}
    \m{\Sigma}_{k \ell}(t)^{-1} \preceq \Ipq \m{X}_\m{B}^\top \m{\Pi} \m{S}_{k \ell}(t)^{-1} \m{X}_\m{B} \Ipq,
\end{equation*}
which becomes a regular inequality when we compute the quadratic form. We get that, for $\rank(\m{B}) \le K$,
\begin{equation*}
    C \le \min_{k \neq \ell}  \sup_{t \in (0,1)} \left[ \frac{t(1-t)}{2} \left\{ (\m{e}_k - \m{e}_\ell)^\top \m{B} \m{\Pi} \m{S}_{k \ell}(t)^{-1} \m{B} (\m{e}_k - \m{e}_\ell) \right\} \right].
\end{equation*}
with equality when $\m{B}$ is full rank.

\subsection{Proof of Lemma~\ref{Thm:Affine_SBM}}

\begin{proof}
By assumption $\m{A}$ and $\m{A}'$ have full rank block mean matrices, therefore, by Lemma~\ref{Lem:SACI}, both stochastic block models have the same size-adjusted Chernoff information if they have the same value for
\begin{equation*}
    (\m{e}_k - \m{e}_\ell)^\top \m{B} \m{\Pi} \m{S}_{k \ell}(t)^{-1} \m{B} (\m{e}_k - \m{e}_\ell).
\end{equation*}
For an affine entry-wise transformation, the entries of the block mean and variance matrices are given by $\m{B}'_{k \ell} = a \m{B}_{k \ell} + b$ and $\m{C}'_{k \ell} = a^2 \m{C}_{k \ell}$. Therefore,
\begin{align*}
    \m{B}' (\m{e}_k - \m{e}_\ell) &= a \m{B} (\m{e}_k - \m{e}_\ell), \\
    \m{S}'_{k \ell}(t)^{-1} &= a^2 \m{S}_{k \ell}(t)^{-1},
\end{align*}
where $\m{S}'_{k \ell}(t)^{-1}$ is the equivalent version of $\m{S}_{k \ell}(t)^{-1}$ in the entry-wise transformed stochastic block model. The contribution from $a$ cancel and there is no contribution from $b$, meaning size-adjusted Chernoff information is unaffected by affine transformation.
\end{proof}

% \begin{proof}
% Let $\smash{\m{B}', \m{C}'}$ be the block mean and block variance matrices for the affine transformed stochastic block model,
% \begin{equation*}
% 	\m{B}' = a \m{B} + b \mathbf{1} \mathbf{1}^\top \fand
% 	\m{C}' = a^2 \m{C},
% \end{equation*}
% where $\mathbf{1} \in \R^n$ is the all-one vector. The size-adjusted Chernoff information only depends on $a$ and $b$ through $\m{B}'$ and $\m{\Gamma}'_{k,\ell}(t)$ via $\m{C}'$. Since,
% \begin{equation*}
% 	\m{B}' (\m{e}_k - \m{e}_\ell) = a \m{B} (\m{e}_k - \m{e}_\ell) \fand
% 	\m{\Gamma}'_{k,\ell}(t)= a^2 \m{\Gamma}_{k,\ell}(t),
% \end{equation*}
% the contribution from $a$ cancel and there is no contribution from $b$, meaning the size-adjusted Chernoff information is unaffected by affine transformation.
% \end{proof}

\subsection{Changes of signature by affine transformation}
We demonstrate the possible effects an affine transformation may have on the signature of a stochastic block model with dimension $d$ and signature $(p,q)$. We focus on affine transformations with $a > 0$ and $b > 0$ although a similar analysis for other affine transformations leads to slight variations of the following results.

Consider a weighted stochastic block model with full rank block mean matrix $\m{B}$ with signature $(p,q)$ and let $\lambda_i (\m{B})$ denote the $i^\mathrm{th}$ smallest eigenvalue of $\m{B}$. The signature of $\m{B}$ implies that $\lambda_1 (\m{B}), \ldots, \lambda_q (\m{B}) < 0$ and $\lambda_{q+1} (\m{B}), \ldots, \lambda_{p+q} (\m{B}) > 0$, since $\m{B}$ is full rank, no eigenvalues are exactly zero.

The block mean matrix of the weighted stochastic block model after affine transformation is given by $\smash{\m{B}' = a \m{B} + b \mathbf{1} \mathbf{1}^\top}$, where $\mathbf{1} \in \R^n$ is the all-one vector. This is a rank-one perturbation of the scaled matrix $a \m{B}$, therefore, by \citet{HJ12}, Corollary 4.3.9, the $i^\mathrm{th}$ eigenvalue of $\m{B}'$ lies between the $i^\mathrm{th}$ and $(i+1)^\mathrm{th}$ eigenvalue of $\m{B}$, that is, for $i = 1, \dots, d-1$,
\begin{equation*}
	\lambda_i(\m{B}) \le \lambda_i(\m{B}') \le \lambda_{i+1}(\m{B}) \fand \lambda_d(\m{B}) \le \lambda_d(\m{B}').
\end{equation*}
Figure~\ref{Fig:EigenWeave} shows an example of this eigenvalue behavior. The only eigenvalue that can change sign after affine transformation is $\lambda_q(\m{B}')$. In the example shown, $\lambda_q(\m{B}') > 0$, meaning the signature of the stochastic block model after affine transformation is $(p+1,q-1)$; if $\lambda_q(\m{B}') < 0$, then the signature would have remained $(p,q)$. The other remaining possibility is that $\lambda_q(\m{B}') = 0$, meaning that $\m{B}'$ is not full rank. In this case, the affine transformation block model would instead need to be embedded into $d-1$ dimensions and would have signature $(p,q-1)$.
\begin{figure}[ht]
	\centering
	\begin{tikzpicture}
		\draw[latex-latex] (-6.5,0) -- (6.5,0);
		% Add origin
		\draw[color=black] (0pt,3pt) -- (0pt,-3pt);
		\draw[color=black] (0pt,0pt) -- (0pt,-1.8pt) node[below] {$0$};
		
		% Arrows
		\draw[-latex, ultra thick] (-5.0,0) -- (-3.3,0);
		\draw[-latex, ultra thick] (-1.6,0) -- (0.5,0);	
		\draw[-latex, ultra thick] (1.8,0) -- (2.9,0);	
		\draw[-latex, ultra thick] (4.1,0) -- (5.7,0);	
		
		% B eigenvalues
		\path [draw=black, fill=white, thick] (-5.0,0.0) circle (2.5pt) node[below] {$\lambda_{q-1}(\m{B})$};
		\path [draw=black, fill=white, thick] (-1.6,0.0) circle (2.5pt) node[below] {$\lambda_{q}(\m{B})$};		
		\path [draw=black, fill=white, thick] (1.8,0.0) circle (2.5pt) node[below] {$\lambda_{q+1}(\m{B})$};
		\path [draw=black, fill=white, thick] (4.1,0.0) circle (2.5pt) node[below] {$\lambda_{q+2}(\m{B})$};
		
		% B' eigenvalues
		\path [draw=black, fill=black, thick] (-3.2,0.0) circle (2.5pt) node[above] {$\lambda_{q-1}(\m{B}')$};
		\path [draw=black, fill=black, thick] (0.6,0.0) circle (2.5pt) node[above] {$\lambda_{q}(\m{B}')$};		
		\path [draw=black, fill=black, thick] (3.0,0.0) circle (2.5pt) node[above] {$\lambda_{q+1}(\m{B}')$};
		\path [draw=black, fill=black, thick] (5.8,0.0) circle (2.5pt) node[above] {$\lambda_{q+2}(\m{B}')$};
		
		% Extras
		\node at (-6.4,-0.3) {$\ldots$};
		\node at (6.4,-0.3) {$\ldots$};	
		\draw [decorate,decoration={brace,amplitude=5pt,mirror,raise=4ex}]
  (-6.8,0) -- (-1.0,0) node[midway,yshift=-3em]{$q$ negative eigenvalues of $\m{B}$};
		\draw [decorate,decoration={brace,amplitude=5pt,mirror,raise=4ex}]
  (1.0,0) -- (6.8,0) node[midway,yshift=-3em]{$p$ positive eigenvalues of $\m{B}$};		
	\end{tikzpicture}
	\caption{Number line showing the eigenvalues of mean block matrices close to the origin. White nodes represent eigenvalues of $\m{B}$ with full rank $d$ and signature $(p,q)$, black nodes represent eigenvalues of $\smash{\m{B}' = a \m{B} + b \mathbf{1} \mathbf{1}^\top}$ with $a > 0$, $b > 0$, full rank $d$ and signature $(p+1,q-1)$.}
	\label{Fig:EigenWeave}
\end{figure}

\end{document}